\newtheorem{theorem}{Theorem}[section]
\newtheorem{prop}[theorem]{Proposition}
\newtheorem{assumption}[theorem]{Assumption}
\newtheorem{remark}{Remark}
\newtheorem{problem}{Problem}
\newtheorem{definition}[theorem]{Definition}
\newtheorem{exmp}{Example}[section]
\DeclareMathOperator{\sinc}{sinc}
\newcommand{\vertiii}[1]{{\left\vert\kern-0.25ex\left\vert\kern-0.25ex\left\vert #1
    \right\vert\kern-0.25ex\right\vert\kern-0.25ex\right\vert}}
\def\BibTeX{{\rm B\kern-.05em{\sc i\kern-.025em b}\kern-.08em
    T\kern-.1667em\lower.7ex\hbox{E}\kern-.125emX}}
\begin{document}


\title{Verified Compositional Neuro-Symbolic Control for Stochastic Systems with Temporal Logic Tasks
}

\author{%
Jun Wang$^{1}$, Haojun Chen$^{2}$, Zihe Sun$^{2}$, and Yiannis Kantaros$^{1}$
\thanks{$^{1}$J. Wang and Y. Kantaros are with the Department of Electrical and Systems Engineering, Washington University in St. Louis (WashU), St. Louis, MO, USA.
{\tt\small \{junw,ioannisk\}@wustl.edu}}
\thanks{$^{2}$H. Chen and Z. Sun are with the Department of Mechanical Engineering, Washington University in St. Louis (WashU), St. Louis, MO, USA.
{\tt\small \{c.haojun, s.zihe\}@wustl.edu}}
\thanks{This work was supported in part by the NSF award CNS $\#2231257$.}
}

\maketitle

\begin{abstract}
Several methods have been proposed recently to learn neural network (NN) controllers for autonomous agents, with unknown and stochastic dynamics, tasked with complex missions captured by Linear Temporal Logic (LTL). Due to the sample-inefficiency of the majority of these works, compositional learning methods have been proposed decomposing the LTL specification into smaller sub-tasks. Then, separate controllers are learned and composed to satisfy the original task. A key challenge within these approaches is that they often lack safety guarantees or the provided guarantees are impractical. This paper aims to address this challenge. Particularly, we consider autonomous systems with unknown and stochastic dynamics and LTL-encoded tasks. We assume that the system is equipped with a finite set of base skills modeled by trained NN feedback controllers. Our goal is to check if there exists a temporal composition of the trained NN controllers - and if so, to compute it -  that will yield a composite system behavior that satisfies the assigned LTL task with probability one. We propose a new approach that relies on a novel integration of automata theory and data-driven reachability analysis tools for NN-controlled stochastic systems. The resulting neuro-symbolic controller allows the agent to generate safe behaviors for unseen complex temporal logic tasks in a zero-shot fashion by leveraging its base skills. We show correctness of the proposed method and we provide conditions under which it is complete. To the best of our knowledge, this is the first work that designs \textit{verified} temporal compositions of NN controllers for unknown and stochastic systems. 
Finally, we provide extensive numerical simulations and hardware experiments on robot navigation tasks to demonstrate the proposed method.
\end{abstract}

\begin{IEEEkeywords}
Autonomous Systems, Neural Network Control, Verification, Temporal Logic Control Synthesis
\end{IEEEkeywords}

\section{Introduction}
Machine learning (ML) techniques have been successfully applied to learn controllers for systems with highly nonlinear, stochastic or unknown dynamics and complex tasks including reach-avoid, sequencing, surveillance, and delivery.  Linear Temporal Logic (LTL) formulas has been used extensively recently to formally define temporal and logical requirements that exist in such complex tasks \cite{baier2008principles}. Learning controllers for LTL specifications has been studied in \cite{wang2015temporal,hahn,gao2019reduced,bouton2019reinforcement,hasanbeig2019reinforcement,bozkurt2020control,lavaei2020formal,hasanbeig2022lcrl,cai2021reinforcement,kantaros2022accelerated} and the references therein. 
The learned controllers are typically modelled using neural networks (NN) to account for continuous state and action spaces. Despite the impressive empirical performance of NN-driven systems, they are characterized by the following limitations: (i) they often lack safety and robustness guarantees, especially, when applied to unseen tasks; and (ii) they are sample-inefficient to learn which becomes more pronounced as the task complexity increases.




%

%
%

To address the first limitation, several verification and analysis methods have been proposed for NN-controlled systems. These works can be categorized into open-loop \cite{fazlyab2020safety, katz2017reluplex, Dutta2017OutputRA, ruan2018reachability, batten2021efficient, chen2021deepsplit,baluta2021scalable} and closed-loop \cite{huang2019reachnn, sun2019formal, Hu2020ReachSDPRA, Tran2020NNVTN, dutta2019reachability, ivanov2021verisig, sibai2021mathsf, ivanov2020verifying, tran2019safety}. Open-loop verification methods aim to verify properties of a \text{trained NN}. For instance, \cite{Dutta2017OutputRA} addresses the problem of output range analysis of trained NNs given an input set. Closed-loop ones verification algorithms investigate if a dynamical system with a trained NN controller can satisfy a  reach-avoid property given a set of possible initial states. For instance, \cite{ivanov2021verisig} considers hybrid systems driven by feedforward NN controllers with sigmoid or tanh activation functions. Similarly, \cite{Hu2020ReachSDPRA} considers discrete-time linear time-varying systems with NN feedback controllers with arbitrary non-linear activation functions. Shielding mechanisms have also been proposed to ensure safety during the training and/or the deployment phase \cite{carr2022safe,Bastani2021SafeRL}. 
The main idea in the latter set of works is to compute a backup policy (called shield) that overrides the nominal control policy when the system is about to violate safety constraints. Although safety is ensured in these works, liveness guarantees are not provided. 
%
%
%
%
 
To address the second limitation, compositional learning methods have been proposed recently \cite{tasse2022skill,  jothimurugan2021compositional, kuo2021compositional, neary2022verifiable}. 
%
The key idea in these works is to decompose the overarching LTL-encoded mission into simpler sub-tasks for which NN controllers can be learned (e.g., using RL) more efficiently. Then, these NN controllers are composed to satisfy the original task.
Nevertheless, the resulting controllers often either lack safety guarantees or the provided guarantees are impractical (e.g., lower bounds on satisfaction probability) for safety-critical applications; also, these guarantees are typically specific to a fixed and given initial system state.
Related is also the recent work in \cite{ivanov2021composelearning} where,  unlike \cite{jothimurugan2021compositional,kuo2021compositional,tasse2022skill,neary2022verifiable}, selection of sub-tasks remain outside the robot control and, instead, they are revealed by the environment; e.g., a `turn left' sub-task is revealed as a robot navigates a hallway. To the contrary, in \cite{jothimurugan2021compositional,kuo2021compositional,tasse2022skill,neary2022verifiable}
sub-tasks are `strategically' selected by the system to satisfy a temporal logic task.
We note that guarantees for a given composite controller can possibly be provided using the above verification methods. Nevertheless, the latter would require knowledge of the system dynamics is known. 
Additionally, even if the system dynamics is known, these verification methods cannot provide guidance in terms of how the base NN controllers should be composed to ensure verified task satisfaction.

To bridge this gap, in this paper, we propose a new method to design \textit{verified} temporal compositions of trained NN controllers for temporal logic tasks for agents with unknown and stochastic dynamics. 
%
Specifically, we consider autonomous systems/agents that are governed by unknown discrete-time nonlinear dynamics while being subject to unknown, but upper-bounded, exogenous disturbances. 
We assume that we have access to a simulator that can generate system trajectories; this is a standard assumption e.g., in RL algorithms. 
The agent is tasked with complex high-level missions captured by a fragment of LTL, called co-safe LTL \cite{baier2008principles}. 
The LTL formula requires the agent to reach and avoid certain system states in a temporal/logical order. For instance, an LTL formula may model an aerial delivery task requiring a drone to deliver packages in location A, B, and C, in this specific order, while avoiding collisions with phone poles and flying over residential areas. We assume that the agent is equipped with a finite set of trained NN controllers where each controller can lead the agent towards a specific state while avoiding certain sub-spaces. We emphasize that we do not make any assumptions about how these NNs have been trained or about their accuracy and architecture. For instance, they may have been trained using RL \cite{sutton2018reinforcement} or MPC-based methods \cite{rubies2019classification}.
Our goal is to check if there exists a temporal composition of the trained NN controllers - and if so, to compute it - that will yield a composite system behavior satisfying the LTL task with probability one, for all initial states belonging to a given set. 

To address this problem, we propose a new approach that relies on a novel integration of existing tools such as  automata, graph-search methods, and data-driven reachability analysis for NN-driven systems. Particularly, inspired by recent compositional temporal logic planning methods \cite{bisoffi2018hybrid, kantaros2020reactive, jothimurugan2021compositional}, we propose a symbolic reasoning method that leverages automaton representations of LTL formulas and graph-search methods to decompose LTL tasks into sequences of reach-avoid tasks. Then, we employ a recently proposed reachability analysis tool \cite{lew2022simple} to check if there exists at least one temporal composition of the NN controllers that can satisfy with probability one all sub-tasks captured by at least one sequence of tasks.
The resulting neuro-symbolic controller allows the agent to generate verified behaviours for unseen complex temporal logic tasks in a zero-shot fashion by leveraging its current NN-based skills.
We show that our proposed algorithm is correct, i.e., the controller generated by the proposed method accomplishes the assigned task with probability one.  We note that, in general, the proposed algorithm is not complete in the sense that it may not find a control policy satisfying the assigned task with probability one, even though such a policy exists. We provide conditions under which our approach is complete and we discuss trade-offs between completeness and computational efficiency.
We highlight that our method can handle any other open-loop or feedback controllers that are not necessarily modeled as NNs. 
In this paper, we consider NN controllers due to their brittleness to small input perturbations 
\cite{donti2020enforcing,xiong2020robustness} and their lack of safe generalization to unseen tasks.  We demonstrate the efficiency of the proposed method using extensive numerical experiments that involve navigation tasks for ground vehicles. 
%
Also, we perform hardware experiments with ground robots to investigate the benefits and limitations of the proposed method with respect to the sim2real gap, i.e., the difference between modeled and real performance. Specifically, we evaluate performance of the designed composite controllers against un-modeled sources of uncertainty, such as floor slipperiness.

A preliminary version of this work was presented in our previous work \cite{wang2022verified} which, however, considers deterministic discrete-time linear systems. We extend our previous work by (i) considering stochastic and unknown system dynamics;
(ii) proving correctness of the proposed method and providing conditions under which the proposed algorithm is complete;
(iii) conducting numerical simulations and hardware experiments that do not exist in \cite{wang2022verified} demonstrating the sim2real gap. To the best of our knowledge, this is the first work that designs \textit{verified} compositions of NN controllers for systems with unknown and stochastic dynamics that yield system behaviors satisfying complex tasks captured by LTL.

\textbf{Contributions:} 
\textit{First}, we propose a new approach to design verified temporal compositions of trained NN controllers for systems with stochastic or even unknown dynamics and co-safe LTL control objectives.
\textit{Second}, we show correctness of the proposed algorithm and we discuss trade-offs between completeness and computational efficiency. 
\textit{Third}, we demonstrate the efficiency of the proposed approach on several case studies that involve complex navigation tasks for ground robots. 
\textit{Fourth}, we perform hardware experiments to investigate the benefits and limitations of the proposed method with respect to the sim2real gap, i.e., the difference between modeled and real performance.

\section{Problem Formulation}\label{sec:problem}

\subsection{Systems with Neural Network Controllers} 
We consider discrete-time nonlinear systems defined as:
\begin{equation}\label{eq:dynamics}
    \bbx_{t+1}=\bbf(\bbx_t, \bbu_t, \boldsymbol\nu_t) ,
\end{equation}
where $\bbx_t\in\ccalX\subseteq\mathbb{R}^d$,  $\bbu_t\in\ccalU_t\subseteq\mathbb{R}^n$ and $|\boldsymbol\nu_t| \leq V$ denote the state, the control input, and bounded exogenous disturbance of the system at time $t\geq0$.
%
%
Notice that we do not assume the knowledge of the systems, i.e., $\bbf$ can be a simulator of the real system. 
We consider $N>0$ sub-spaces of interest, denoted by $\ell_i \subset \ccalX$, that are inside $\ccalX$.
We assume that at any time $t$ the system can apply control inputs selected from a finite set of feedback controllers collected in the set $\Xi=\{\xi_{i}\}_{i=1}^N$, where $\xi_{i}(\bbx_t): \ccalX\to\mathbf{R}^n$ maps system states to control actions.
%
%
We consider controllers $\xi_{i}(\bbx_t)$ that are parameterized by multi-layer neural networks (NNs). 
We make the following assumption about the controllers $\xi_{i}(\bbx_t)$:

\begin{assumption}[NN Controllers]\label{as:NN1}
Every NN controller $\xi_{i}$ has been trained so that it can drive the system state $\bbx_t$ towards the interior of $\ell_i$ for any initial state in $\ccalX$.
\end{assumption}

We do not make any assumption about the performance and architecture of the trained NN controllers or how they have been trained. Hereafter, with slight abuse of notation, we denote by $\xi(t)$ the controller selected from $\Xi$ at time $t$.
To ensure that the NN output respects the input constraint, we consider a projection operator, denoted by $\text{Proj}_{\ccalU_t}$, and define the control input as $\bbu_t=\text{Proj}_{\ccalU_t}\xi(t)$ \cite{Hu2020ReachSDPRA}. 
%
%
Next, we define a NN-based control strategy $\boldsymbol\xi$ as a temporal composition of the controllers in $\Xi$.
\begin{definition}[Control Strategy]\label{def:NNcontrol}
A NN-based control strategy $\boldsymbol\xi$ is defined as a finite sequence of NN controllers selected from $\Xi$, i.e., $\boldsymbol\xi=\mu(0),\mu(1),\mu(2),\dots,\mu(K)$, for some finite $K>0$, where $\mu(k)\in\Xi$, for all $k\in\{0,1,\dots,K\}$, and $\mu(k)$ is applied for a finite horizon $H_k$. 
\end{definition} 
We highlight that in the above definition, the index $k$ is different from the time instants $t$ used in \eqref{eq:dynamics}. In fact, $k$ is an index, initialized as $k=0$ and increased by $1$ every $H_k$ time instants, pointing to the next NN controller in the sequence $\boldsymbol\xi$. Also, we note again that $\mu(k)$ is a feedback controller from $\Xi$. For instance, if $\mu(0)=\xi_i$, for some $i\in\{1,\dots,N\}$, then the system applies the controller $\xi_i(\bbx_t)$, for all time instants $t\in\{0,1,\dots,H_0\}$. 
Under a NN-based control strategy $\boldsymbol\xi$, an initial state $\bbx_0$, the corresponding closed-loop system 
can generate a finite sequence of system states, denoted by $\tau(\bbx_0)=\{\bbx_0, \bbx_1,\dots,\bbx_t,\dots, \bbx_F\}$, where $F=\sum_{k=0}^K H_k$ is a finite horizon. In fact, $\tau(\bbx_0)$ is a stochastic sequence due to the noise $\boldsymbol\nu_t$ in the system dynamics. %

\subsection{Linear Temporal Logic Properties} We define mission and safety properties for the system \eqref{eq:dynamics} using Linear Temporal Logic (LTL). LTL is a type of formal logic whose basic ingredients are a set of atomic propositions (i.e., Boolean variables), denoted by $\mathcal{AP}$, the Boolean operators, i.e., conjunction $\wedge$, and negation $\neg$, and two temporal operators, next $\bigcirc$ and until $\mathcal{U}$. LTL formulas over a set $\mathcal{AP}$ can be constructed based on the following grammar: $\phi::=\text{true}~|~\pi~|~\phi_1\wedge\phi_2~|~\neg\phi~|~\bigcirc\phi~|~\phi_1~\mathcal{U}~\phi_2$, where $\pi\in\mathcal{AP}$. 
For brevity, we abstain from presenting the derivations of other Boolean and temporal operators, e.g., \textit{always} $\square$, \textit{eventually} $\lozenge$, \textit{implication} $\Rightarrow$, which can be found in \cite{baier2008principles}. 
Hereafter, we define the set $\mathcal{AP}$ as $\mathcal{AP}=\cup_i\{\pi^{\ell_i}\}$, where $\pi^{\ell_i}$ is an atomic predicate that is true when the system state $\bbx_t$ is within region $\ell_i$.
We restrict our attention to co-safe LTL properties that exclude the use of the `always' operator; see also Example \ref{ex:LTL}.
Co-safe LTL formulas are satisfied by finite sequences of states $\tau(\bbx_0)=\{\bbx_0, \bbx_1,\dots,\bbx_t,\dots, \bbx_F\}$ where $F$ is a finite horizon \cite{baier2008principles} that can be generated by \eqref{eq:dynamics} for some control strategy $\boldsymbol\xi$. Due to the noisy system dynamics, satisfaction of $\phi$ can only be reasoned probabilistically. Given a control strategy $\boldsymbol\xi$ and a set $\ccalX_0$ of initial states, hereafter, we denote by $\mathbb{P}_{\tau(\bbx_0)\sim D}(\tau(\bbx_0)\models\phi|\boldsymbol\xi,\ccalX_0)$ the probability that the corresponding closed-loop system 
will generate a sequence $\tau(\bbx_0)$ that satisfies $\phi$ for any initial state $\bbx_0\in\ccalX_0$, where $D$ is the distribution over an infinite number of trajectories generated by \eqref{eq:dynamics}. 
\begin{figure}[t]
\centering
\includegraphics[width=0.8\linewidth]{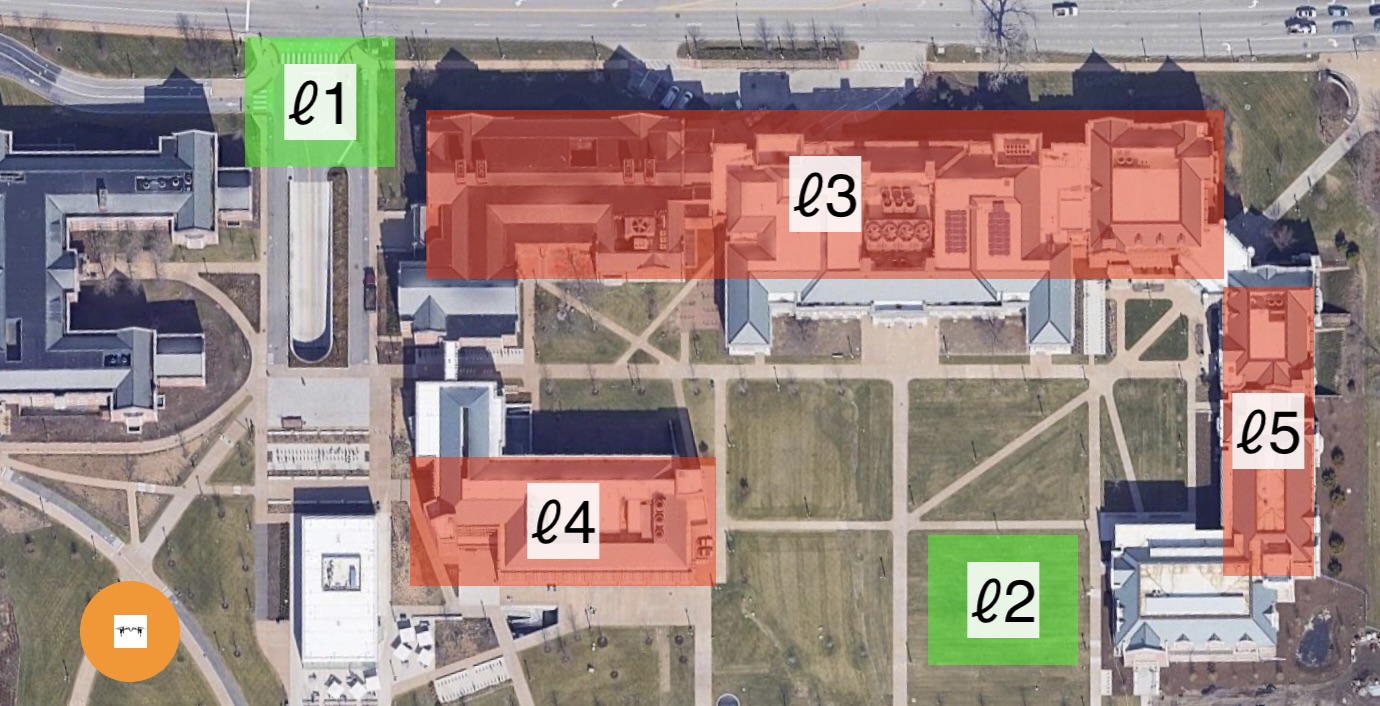}
\caption{
A drone is tasked with a delivery task over the campus of Washington University in St. Louis. This mission is captured by an LTL formula $\phi=(\Diamond \pi^{\ell_1}) \wedge (\Diamond \pi^{\ell_2}) \wedge (\neg \pi^{\ell_1} \ccalU \pi^{\ell_2}) \wedge (\neg (\pi^{\ell_3} \vee \pi^{\ell_4} \vee \pi^{\ell_5}) \ccalU \pi^{\ell_1})$ requiring to \textit{eventually}  reach region $\ell_2$ \textit{and} then \textit{eventually} reach region $\ell_1$ (green rectangles) \textit{in this order} to deliver packages. Meanwhile, the drone is also required to avoid flying over buildings located in $\ell_3$, $\ell_4$, and $\ell_5$ (red rectangles) \textit{until} all packages are delivered. The drone can start from a range of initial locations (orange disk) and is equipped with trained controllers $\xi_i$ for each region-to-reach $\ell_i$ (i.e., $\Xi = \{ \xi_1, \xi_2, \xi_3, \xi_4, \xi_5\}$). 
The drone is subject to bounded exogenous disturbances such as wind gusts. Our goal in this paper is to design verified temporal compositions of the NN controllers $\xi_i$ that satisfy the assigned LTL task with probability $1$. 
%
} 
\label{fig:background}
\end{figure}
\begin{exmp}\label{ex:LTL}
Examples of co-safe LTL specifications $\phi$ follow: (i) $\phi=\Diamond(\pi^{\ell_1})\wedge (\neg \pi^{\ell_2}\ccalU \pi^{\ell_1})$ captures a common reach-avoid property requiring the system to eventually reach the region of interest $\ell_1$ while avoiding in the meantime the unsafe region $\ell_2$; (ii) $\phi=(\Diamond\pi^{\ell_1})\wedge (\Diamond\pi^{\ell_2}) \wedge (\Diamond\pi^{\ell_3})\wedge(\Diamond\pi^{\ell_4})\wedge (\neg \pi^{\ell_4} \ccalU \pi^{\ell_1})$, which requires the system to eventually reach the regions of interest $\ell_1$, $\ell_2$, $\ell_3$, and $\ell_4$, in any order, as long as $\ell_4$ is avoided until region $\ell_1$ is reached; 
(iii) $\phi=(\Diamond \pi^{\ell_1}) \wedge (\Diamond \pi^{\ell_2}) \wedge (\neg \pi^{\ell_1} \ccalU \pi^{\ell_2}) \wedge (\neg (\pi^{\ell_3} \vee \pi^{\ell_4} \vee \pi^{\ell_5}) \ccalU \pi^{\ell_1})$ , which requires the system to eventually reach regions $\ell_2$ and $\ell_1$ in this order, as long as $\ell_3, \ell_4, \ell_5$ are avoided until region $\ell_1$ is reached; see also Fig \ref{fig:background}.  
\end{exmp}

\subsection{Problem Statement}
The problem addressed in this paper can be summarized as follows:

\begin{problem}[Verification Problem]\label{pr1}
\textit{Given} (i) a set of initial states $\ccalX_0\subseteq\ccalX$; (ii) the system dynamics \eqref{eq:dynamics} (or a simulator); (iii) a co-safe LTL property $\phi$; (iv) a finite set $\Xi$ of NN controllers; and (v) disturbance bound $V$, check if there exists a NN-based control strategy $\boldsymbol\xi$ (Definition \ref{def:NNcontrol}), so that $\mathbb{P}_{\tau(\bbx_0)\sim D}(\tau(\bbx_0)\models\phi|\boldsymbol\xi,\ccalX_0)=1$, for all initial states $\bbx_0\in\ccalX_0$; if there exists such a $\boldsymbol\xi$, compute it, as well. 
\end{problem}

\section{Neuro-Symbolic Control \\for Temporal Logic Tasks}\label{sec:deeplogic}

We propose a new method to design verified compositions of NN controllers for co-safe LTL tasks; see Alg. \ref{algo1}.
First, we translate the LTL formula $\phi$ into a Deterministic Finite state Automaton (DFA) [line \ref{algo1:DFA}, Alg. \ref{algo1}]; see Section \ref{sec:DFA}. 
Second, building upon \cite{kantaros2020reactive,vasilopoulos2021reactive}, we leverage the DFA to decompose $\phi$ into reach-avoid properties [line \ref{algo1:dec}, Alg. \ref{algo1}]; see Sections \ref{sec:reachAvoid}-\ref{sec:VerReach}. Then, we apply graph-search methods combined with reachability analysis to check if there exists $\boldsymbol\xi$ so that 
$\mathbb{P}_{\tau(\bbx_0)\sim D}(\tau(\bbx_0)\models\phi|\boldsymbol\xi,\ccalX_0)=1$,
%
%
%
for all $\bbx_0\in\ccalX_0$ [line \ref{algo1:reach}, Alg. \ref{algo1}]; see Section \ref{sec:vltl}. Then, the proposed algorithm returns either  \texttt{False} if such a control strategy $\boldsymbol\xi$ does not exist or, otherwise, \texttt{True} along with $\boldsymbol\xi$ [line \ref{algo1:output}, Alg. \ref{algo1}]. Trade-offs between completeness and computational efficiency are discussed in Section \ref{sec:complete}.

\begin{algorithm}[t]
\caption{Verified Compositional Neuro-Symbolic Control for Stochastic Systems with Temporal Logic Tasks}
\label{algo1}
\begin{algorithmic}[1]
\State \textbf{Input}: Formula $\phi$; System Dynamics/Simulator \eqref{eq:dynamics}; Controllers $\Xi$; Disturbance Bound $V$
\State Translate $\phi$ into DFA $D$ (Section \ref{sec:DFA})\;\label{algo1:DFA}
\State Using $D$, decompose $\phi$ into reach-avoid properties (Sections \ref{sec:reachAvoid}-\ref{sec:VerReach})\;\label{algo1:dec}
\State Apply graph-search \& reachability analysis over the DFA state-space \label{a_dfs} (Section \ref{sec:vltl})\;\label{algo1:reach}
\State \textbf{Output}: Verification result  $R\in\{[\texttt{True},\boldsymbol\xi],\texttt{False}\}$\;\label{algo1:output}
\end{algorithmic}
\end{algorithm}
\normalsize

\subsection{From LTL formulas to DFA}\label{sec:DFA}
First, we translate  $\phi$, constructed using $\mathcal{AP}$, into a DFA defined as follows \cite{baier2008principles}; see also Fig. \ref{fig:example}.%
\begin{definition}[DFA]
A Deterministic Finite state Automaton (DFA) $D$ over $\Sigma=2^{\mathcal{AP}}$ is defined as a tuple $D=\left(\ccalQ_{D}, q_{D}^0,\Sigma,\delta_D,q_D^F\right)$, where $\ccalQ_{D}$ is the set of states, $q_{D}^0\in\ccalQ_{D}$ is the initial state, $\Sigma$ is an alphabet, $\delta_D:\ccalQ_D\times\Sigma\rightarrow\ccalQ_D$ is a deterministic transition relation, and $q_D^F\in\ccalQ_{D}$ is the accepting/final state. 
\end{definition}

\begin{figure}[t]
    \centering
    \begin{subfigure}[t]{0.48\linewidth}
        \includegraphics[width=\textwidth]{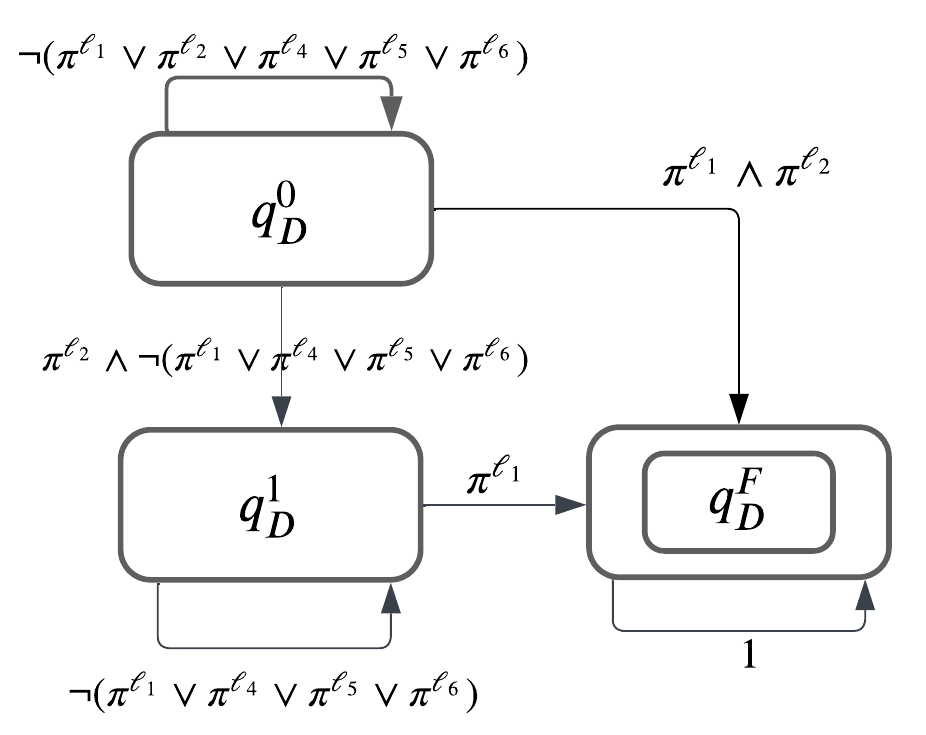}
        \caption{}
        \label{fig:example}
    \end{subfigure}
    \begin{subfigure}[t]{0.5\linewidth}
        \includegraphics[width=\textwidth]{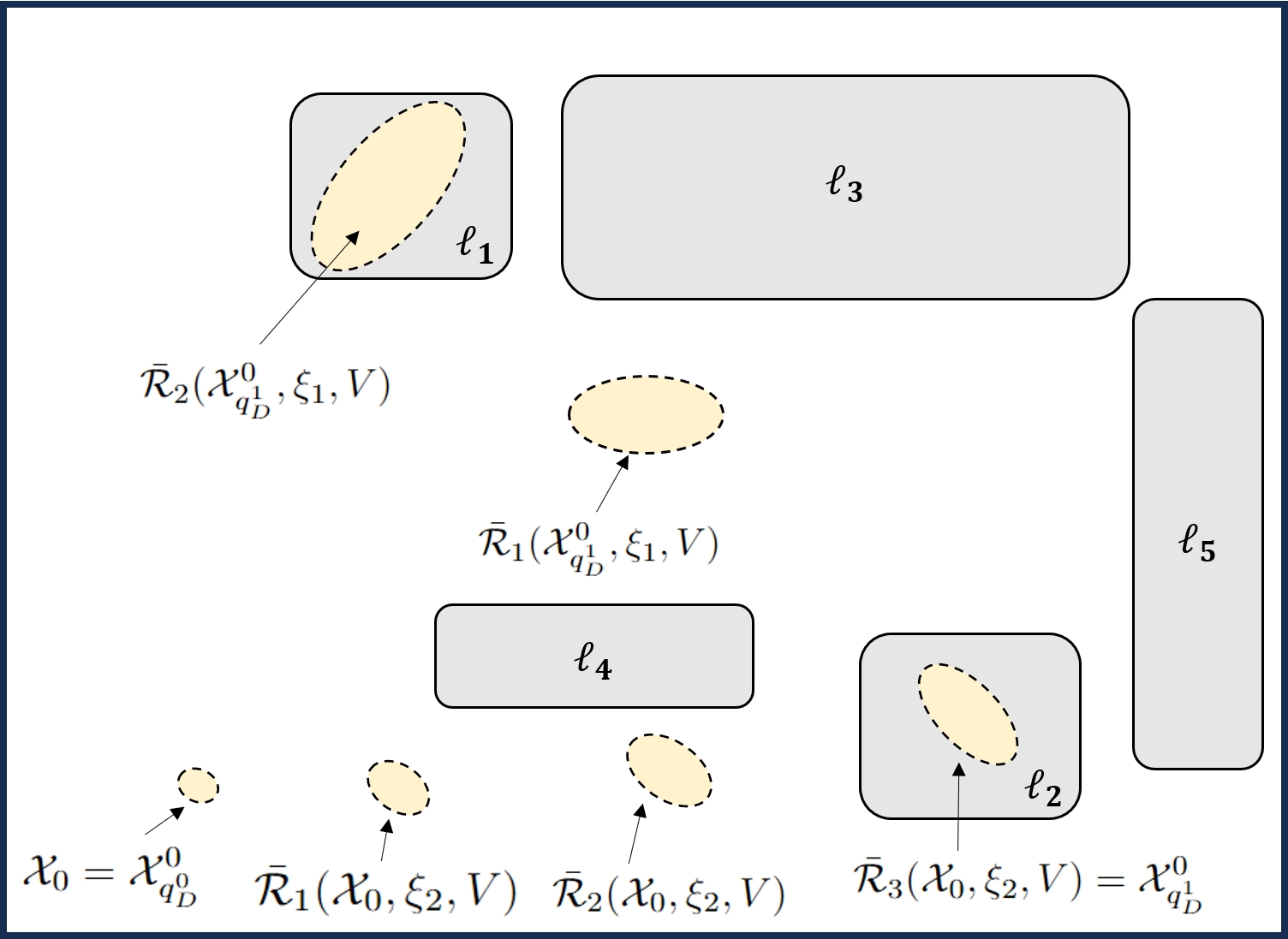}
        \caption{}
        \label{rfidtest_yaxis}
    \end{subfigure}
    \caption{(a) DFA corresponding to $\phi=(\Diamond \pi^{\ell_1}) \wedge (\Diamond \pi^{\ell_2}) \wedge (\neg \pi^{\ell_1} \ccalU \pi^{\ell_2}) \wedge (\neg (\pi^{\ell_3} \vee \pi^{\ell_4} \vee \pi^{\ell_5}) \ccalU \pi^{\ell_1})$. 
    (b) Illustration of the reachable sets over the DFA state space (see Section \ref{sec:vltl}) 
    }
    \label{fig:reach}
\end{figure}


To interpret a temporal logic formula over a realization of $\tau(\bbx_0)$ generated by \eqref{eq:dynamics}, we use a labeling function $L:\ccalX\rightarrow 2^{\mathcal{AP}}$ that maps system states to symbols $\sigma\in\Sigma$. 
A realization of $\tau(\bbx_0)$, denoted by $\bbx_0,\bbx_1,\dots, \bbx_F$  satisfies $\phi$ if the word $w=L(\bbx_0)L(\bbx_1)\dots L(\bbx_F)$ yields an accepting DFA run, i.e., if starting from the initial state $q_D^0$, each symbol/element in $w$ yields a DFA transition so that eventually the final state $q_D^F$ is reached \cite{baier2008principles}. 

\subsection{From DFA to Reach-Avoid Properties}\label{sec:reachAvoid}
Given any DFA state $q_D\in\ccalQ_D$, we compute a set $\ccalR_{q_D}$ that collects all DFA states that can be reached, in one hop, from $q_D$ using a symbol $\sigma$. This set is defined as: 
\begin{equation}\label{eq:R}
    \ccalR_{q_D}=\{q_D'~|~q_D'=\delta_D(q_D,\sigma), \sigma\in\Sigma\}.
\end{equation}
Then, given $q_D$ and for each  DFA state $q_D'\in\ccalR_{q_D}$, we introduce the following definitions. We construct a set that collects the states $\bbx\in\ccalX$, so that if the system state coincides with one of these states, then a symbol $\sigma=L(\bbx)\in\Sigma$ enabling this DFA transition will be generated. We collect these states in the set $\ccalX_{q_D\rightarrow q_D'}$, i.e.,
\begin{equation}\label{eq:X}
    \ccalX_{q_D\rightarrow q_D'} =\{\bbx\in\ccalX ~|~q_D'=\delta_D(q_D,L(\bbx))\}
\end{equation}

Starting from any system state $\bbx\in\ccalX$ and a DFA state $q_D$, transition from $q_D$ to $q_D'\in\ccalR_{q_D}\setminus q_D$ will eventually occur after $H_{q_D}\geq0$ discrete time steps, if (i) the system state $\bbx_t$ remains within $\ccalX_{q_D\rightarrow q_D}$ for the next $H_{q_D}-1$ steps, and (ii) at time $t=H_{q_D}$, we have that $\bbx_{H_{q_D}}\in\ccalX_{q_D\rightarrow q_D'}$. Essentially, (i)-(ii) model a \textit{reach-avoid} requirement. 
For instance, (i) may require a robot to stay within the obstacle-free space and (ii) may require a robot to eventually enter a region.

\begin{exmp}[Reach-Avoid Properties]\label{ex2}
Consider the DFA shown in Fig. \ref{fig:example} that
corresponds to the LTL formula $\phi=(\Diamond \pi^{\ell_1}) \wedge (\Diamond \pi^{\ell_2}) \wedge (\neg \pi^{\ell_1} \ccalU \pi^{\ell_2}) \wedge (\neg (\pi^{\ell_3} \vee \pi^{\ell_4} \vee \pi^{\ell_5}) \ccalU \pi^{\ell_1})$. Since there exists a DFA transitions from $q_D^0$ to the states $q_D^0,q_D^1,q_D^F$, we have that $\ccalR_{q_D^0}=\{q_D^0,q_D^1,q_D^F\}$. Also, the Boolean formulas on top of the DFA transitions capture the conditions under which these transitions can be enabled. For instance, the self loop in $q_D^0$ is enabled if the robot is outside the regions \{$\ell_1$, $\ell_2$, $\ell_3$, $\ell_4$, $\ell_5$\}. Thus, we have that  $\ccalX_{q_D^0\rightarrow q_D^0}=\ccalX\setminus(\ell_1\cup\ell_2\cup\ell_3\cup\ell_4\cup\ell_5)$. Similarly, we have that $\ccalX_{q_D^0\rightarrow q_D^1}=
(\ccalX\setminus(\ell_1\cup\ell_3\cup\ell_4\cup\ell_5))\cap\ell_2$,
$\ccalX_{q_D^0\rightarrow q_D^F}=\ell_2\cap\ell_1$ and that $\ccalR_{q_D^1}=\{q_D^1,q_D^F\}$, $\ccalX_{q_D^1\rightarrow q_D^1}=\ccalX\setminus(\ell_1\cup\ell_3\cup\ell_4\cup\ell_5)$, and $\ccalX_{q_D^1\rightarrow q_D^F}=\ell_1$. 
\end{exmp}

\subsection{Verifying Reach-Avoid Properties}\label{sec:VerReach}
In what follows, we discuss how to check whether a transition from a DFA state $q_D$ to $q_D'\neq q_D$ can be enabled; later, we will discuss how this can be used to design verified control policies that satisfy the LTL formula with probability 1.
%
%
Specifically, we want to verify that given an initial set of system states associated with $q_D$, denoted by $\ccalX_{q_D}^0$, the  conditions (i)-(ii), discussed in Section \ref{sec:reachAvoid}, can be satisfied with probability 1; the detailed construction of $\ccalX_{q_D}^0$ will be discussed in Section \ref{sec:vltl}. 
Notice that $\ccalX_{q_D\rightarrow q_D'}$ may contain more than one region of interest $\ell_i$.
As mentioned in Assumption \ref{as:NN1}, for every region of interest $\ell_i$ in $\ccalX_{q_D\rightarrow q_D'}$, there exists a NN controller $\xi_i\in\Xi$ that has been trained to drive the system towards $\ell_i$. Hereafter, we collect all NN controllers associated with $\ccalX_{q_D\rightarrow q_D'}$ in a set denoted by $\Xi_{q_D\rightarrow q_D'}\subseteq\Xi$.\footnote{In case $\ccalX_{q_D\rightarrow q_D'}$ does not contain any region $\ell_i$, then $\Xi_{q_D\rightarrow q_D'}=\emptyset$ by definition of $\Xi$; see Ex. \ref{ex3}.}
Specifically, we want to show that there exists a finite horizon $H_{q_D}$ and at least one NN controller $\xi\in\Xi_{q_D\rightarrow q_D'}$, so that if the system evolves as per $\bbx_{t+1}=\bbf(\bbx_t,\xi, \boldsymbol\nu_t)$ then the following two conditions hold for all possible initial system states in $\ccalX_{q_D}^0$ and all possible noise values $\boldsymbol{\nu}_t, |\boldsymbol\nu_t|\leq V$: (i) $\bbx_{t} \in \ccalX_{q_D\rightarrow q_D}, \forall t\in[0,H_{q_D}-1]$ and (ii) $\bbx_{H_{q_D}}\in\ccalX_{q_D\rightarrow q_D'}$.
If such a horizon $H_{q_D}$ and controller $\xi\in\Xi_{q_D\rightarrow q_D'}$ exist, then by definition of the set $\ccalX_{q_D\rightarrow q_D'}$ in \eqref{eq:X}, we have that with probability one the transition from $q_D$ to $q_D$ (self-loop) is enabled  within the time interval $[0,H_{q_D}-1]$, and at the time step $t=H_{q_D}$ the transition from $q_D$ to $q_D'$ occurs. In this case, we say that the DFA transition from $q_D$ to $q_D'$ is \textit{verified to be safe} when the system starts anywhere within $\ccalX_{q_D}^0$ and applies the controller $\xi$.

To reason about safety of a DFA transition, we leverage existing reachability analysis tools that can compute forward reachable sets, denoted by $\ccalR_t(\ccalX_{q_D}^0,\xi,V)$, collecting all possible states $\bbx$ that the system may reach after applying a feedback NN controller $\xi$ for $t$ time steps
in the presence of noise, upper bounded by $V$, for any possible initial state in $\ccalX_{q_D}^0$. Given such reachable sets, it suffices to check if there exists a finite horizon $H_{q_D}$ and at least one controller $\xi\in\Xi_{q_D\rightarrow q_D'}$ such that the reachable sets satisfy the following two conditions: 
\begin{equation}\label{eq:condI}
    \ccalR_{t}(\ccalX_{q_D}^0,\xi,V)\subseteq \ccalX_{q_D\rightarrow q_D}, \forall {t}\in[0,\dots,H_{q_D}-1],
\end{equation}
and 
\begin{equation}\label{eq:condII}
    \ccalR_{H_{q_D}}(\ccalX_{q_D}^0,\xi,V)\subseteq \ccalX_{q_D\rightarrow q_D'}.
\end{equation}
If both conditions hold, we verify that the DFA transition from $q_D$ to $q_D'$ is \textit{safe} given the initial set of states $\ccalX_{q_D}^0$ and the controller $\xi$ \cite{huang2019reachnn}. 

Construction of exact reachable sets is computationally intractable.
Thus, instead, we compute over-approximated reachable sets, denoted by $\bar{\ccalR}_t(\ccalX_{q_D}^0,\xi,V)$, where $\ccalR_t(\ccalX_{q_D}^0,\xi,V) \subseteq \bar{\ccalR}_t(\ccalX_{q_D}^0,\xi,V)$,
%
using recently proposed reachability tools for stochastic systems \cite{lew2022simple}.
%
%
If the following two conditions are satisfied
\begin{equation}\label{eq:reachi2}
\bar{\ccalR}_{t}(\ccalX_{q_D}^0, \xi,V)\subseteq \ccalX_{q_D\rightarrow q_D}, \forall {t}\in[0,\dots,H_{q_D}-1]
\end{equation}
\begin{equation}\label{eq:reachii2}
\bar{\ccalR}_{H_{q_D}}(\ccalX_{q_D}^0,\xi,V)\subseteq \ccalX_{q_D\rightarrow q_D'},
\end{equation}
then we say that the considered DFA transition is verified to be safe given the initial set of states $\ccalX_{q_D}^0$ and a feedback NN controller $\xi$.\footnote{In practice, reachable sets over a large enough horizon $\bar{H}$ are computed. If there is not reachable set $\bar{\ccalR}_t$, for some $t\in\{0,\dots,\bar{H}\}$ that satisfies \eqref{eq:reachii2}, then we say that the system fails to reach this region of interest. This is in accordance with related works; see e.g., \cite{Hu2020ReachSDPRA, huang2019reachnn}.} Finally, if there is no self-loop for $q_D$, then $\ccalX_{q_D\rightarrow q_D}$ cannot be defined. In this case, such a transition from $q_D$ to $q_D'$ is verified to be safe if \eqref{eq:reachii2} holds for $H_{q_D}=1$.

In what follows, we discuss how the over-approximated reachable sets are computed using the reachability analysis algorithm for stochastic systems proposed in \cite{lew2022simple}.
This algorithm estimates reachable sets in a sampling-based manner. 
Consider a set of (initial) states $\ccalX_{q_D}^0$ that the system state may be in at $t=0$. Our goal is to compute the reachable sets at the next $H$ time steps, for some $H\geq 1$. To compute the reachable at time $t=1$, first, $M$ system states $\bbx_1$ are sampled from $\ccalX_{q_D}^0$. Then, using  a given NN controller $\xi$ the corresponding next states $\bbx_{1}$ are computed using a simulator of the system dynamics \eqref{eq:dynamics}.
Then, we compute a convex hull over all (simulated) system states  $\bbx_{1}$ and we apply padding of size $\epsilon$ to this convex hull. The padded convex hull is the estimated reachable set at $t=1$, denoted by $\bar{\ccalR}_{1}$. Using $\bar{\ccalR}_{t}$ as a set of possible initial states at time $t\geq 1$, we repeat the above steps to compute the subsequent reachable sets $\bar{\ccalR}_{t+1}$, for all $t\in\{2,\dots,H\}$. As per \cite{lew2022simple}, the probability that $\bar{\ccalR}_{t}$ over-approximates the corresponding true reachable set $\ccalR_{t}$ is at least $1-\delta_M$, where $\delta_M$ depends on system state dimension $d$,
the padding size $\epsilon$, and the number $M$ of sampled points, $\mathcal{P}(\ccalR_t \subseteq \bar{\ccalR}_t) \geq 1-\delta_M$.  
Notice that the probability that $\ccalR_{t}\subseteq\bar{\ccalR}_{t}$ for all $t\in\{1, \dots,H\}$ is $\mathcal{P}_H = \prod_{t=1}^H\mathcal{P}(\ccalR_t \subseteq \bar{\ccalR}_t) \geq (1-\delta_M)^H$.

\begin{exmp}[Verifying Reach-Avoid Properties (cont)]\label{ex3}
Consider the DFA in Fig. \ref{fig:example} and, specifically, the transition from $q_D^0$ to $q_D^1$. To reach $q_D^1$ starting from $q_D^0$, the available controllers are $\Xi_{q_D^0\rightarrow q_D^1}=\{\xi_2\}$ by construction of the set $\ccalX_{q_D^0\rightarrow q_D^1}$; see Ex. \ref{ex2} and Section \ref{sec:problem}. Let $\ccalX_{q_D^0}^0=\ccalX_0$. Then, we compute reachable sets $\bar{\ccalR}_t(\ccalX_0,\xi_2,V)$. These reachable sets are shown in Fig. \ref{fig:reach}. Observe that $H_{q_D^0}=3$, since the sets $\bar{\ccalR}_t(\ccalX_0,\xi_2,V)$ for $t=0,1,2$ satisfy \eqref{eq:reachi2} and $\bar{\ccalR}_3(\ccalX_0,\xi_2,V)$ satisfies \eqref{eq:reachii2}. Thus, the transition from $q_D^0$ to $q_D^1$ is verified to be safe. For the transition from $q_D^1$ to $q_D^F$, we have that $\Xi_{q_D^1\rightarrow q_D^F}=\{\xi_1\}$. Verification of this transition will be discussed in Ex. \ref{ex4}. Finally, we have $\Xi_{q_D^0\rightarrow q_D^0}=\emptyset$ and $\Xi_{q_D^1\rightarrow q_D^1}=\{\xi_2\}$.
\end{exmp}

\subsection{Designing Verified Compositions of NN Controllers}\label{sec:vltl}

\begin{algorithm}[t]
\caption{Reach\_DFS Algorithm}\label{algo2}
\begin{algorithmic}[1]
\State \textbf{Input}: $D'$; NN controllers $\Xi$; System dynamics/Simulator \eqref{eq:dynamics}; Disturbance bound $V$
\State \textbf{Output}: Verification output  $R\in\{[\texttt{True},\boldsymbol\xi],\texttt{False}\}$
\State Initialize $q_D^{\text{cur}} \leftarrow q_D^0$; $\ccalX_{q_D^{\text{cur}}}^0=\ccalX_0$;  $\bbq=q_D^{\text{cur}}$; $\ccalV_{\text{vis}}=\{q_D^{\text{cur}}\}$;  $g(q_D^{\text{cur}})\leftarrow \ccalX_{q_D^{\text{cur}}}^0$;  $\boldsymbol\xi \leftarrow \emptyset$; $E=\texttt{False}$;
\While{$E \neq \texttt{True}$}
\State \text{Randomly select} $q_D^{\text{next}}\in \ccalR_{q_D^{\text{cur}}}$; $\ccalV_{\text{vis}} \leftarrow \ccalV_{\text{vis}} \cup q_D^{\text{next}}$
\State Compute set of available controllers $\Xi_{q_D^{\text{cur}}\rightarrow q_D^{\text{cur}}}$\; 
\If {$\exists$ $H_{q_D^{\text{cur}}}$ and $\xi\in\Xi_{q_D^{\text{cur}}\rightarrow q_D^{\text{next}}}$ for \eqref{eq:reachi2}-\eqref{eq:reachii2}}
\State $\bbq=\bbq|q_D^{\text{cur}}$; $\boldsymbol\xi\leftarrow \boldsymbol\xi| \xi$; $\ccalX_{q_D^{\text{next}}}^0\leftarrow \bar{\mathfrak{R}}_{H_{q_D^{\text{cur}}}}(q_D^{\text{cur}})$;
\State   $q_D^{\text{cur}} \leftarrow q_D^{\text{next}}$;
$g(q_D^{\text{next}})\leftarrow \ccalX_{q_D^{\text{next}}}^0$;
\If {$q_D^{\text{cur}}=q_D^F$}
\State $R = [\texttt{True},\boldsymbol\xi]$;  $E= \texttt{True}$;
\EndIf
\Else
\State $\ccalR_{q_D^{\text{cur}}} \leftarrow \ccalR_{q_D^{\text{cur}}} \setminus q_D^{\text{next}}$
\While {$\ccalR_{q_D^{\text{cur}}}\setminus \ccalV_{\text{vis}}  = \emptyset \wedge \bbq \neq \emptyset$}
\State $q_D^{\text{cur}} \leftarrow \bbq(\text{end})$; $\ccalX_{q_D^{\text{cur}}}^0\leftarrow g(q_D^{\text{cur}})$;
\State $\bbq\leftarrow \bbq \setminus \bbq(\text{end})$; $\boldsymbol\xi\leftarrow \boldsymbol\xi\setminus \boldsymbol\xi(\text{end})$
\EndWhile
\If{$\ccalR_{q_D^{\text{cur}}}\setminus \ccalV_{\text{vis}}  = \emptyset\wedge \bbq = \emptyset $}
\State $R\leftarrow \texttt{False}$; $E\leftarrow \texttt{True}$;
\EndIf
\EndIf
\EndWhile
\end{algorithmic}
\end{algorithm}
\normalsize

To check if there exists a composed NN control strategy $\boldsymbol\xi$ that satisfies $\phi$ with probability 1,
%
%
it suffices to check that the final DFA state can be reached from the initial state by enabling a sequence of DFA transitions that are verified to be safe as shown in Section \ref{sec:DFA}.
To check this, first we view the DFA as a directed graph $D=\{\ccalV, \ccalE\}$ with vertices $\ccalV$ and edges $\ccalE$ that are determined by the set of states and transitions of the DFA. Then, by applying a depth-first search algorithm over the DFA, we check if it is possible to reach the final DFA state from the initial state by following a sequence of DFA transitions that are verified to be safe as shown in Section \ref{sec:VerReach}. In what follows, we discuss this process in detail.

As discussed in Section \ref{sec:VerReach}, to verify safety of a DFA transition, an initial set of states is needed denoted by $\ccalX_{q_D}^0$. This set captures all possible states in $\ccalX$ that the system may have when it reaches a DFA state $q_D$. As a result, $\ccalX_{q_D}^0$ depends on the previous DFA states that the system has gone through to reach $q_D$. To simplify the proposed algorithm, we pre-process $D$ so that each node in $D$ that can be reached through multiple paths (excluding self-loops) originating from $q_D^0$ is replicated so that each replica can be reached through a unique path (excluding self-loops).\footnote{Otherwise, $\ccalX_{q_D}^0$ should depend on both the state $q_D$ and the path to reach $q_D$ which may complicate the notations as well as the description of Alg. \ref{algo2}.} We pre-process the graph as follows. 
For each vertex $q_D\in \ccalV$, we define sets that collect its incoming and outgoing edges denoted by $\ccalE_{q_D}^{\text{in}}$ and $\ccalE_{q_D}^{\text{out}}$, respectively.
If the number of incoming edges (excluding self-loops) for $q_D$ is greater than $1$, i.e., $|\ccalE_{q_D}^{\text{in}}|>1$ we create $|\ccalE_{q_D}^{\text{in}}|$ copies of $q_D$ denoted by $q_D^i$. Each node $q_D^i$ has only one incoming edge which is selected to be the $i$-th edge in $\ccalE_{q_D}^{\text{in}}$, denoted by $\ccalE_{q_D}^{\text{in}}(i)$, while its outgoing edges remain the same as in the original node $q_D$. Then we add all copies to the graph and remove the original nodes $q_D$. We denote the resulting graph by $D'$. An example illustrating this step is shown in Fig. \ref{fig:pre_processing_dfa}.
%

\begin{figure}[t]
    \centering
    \includegraphics[width=\linewidth]{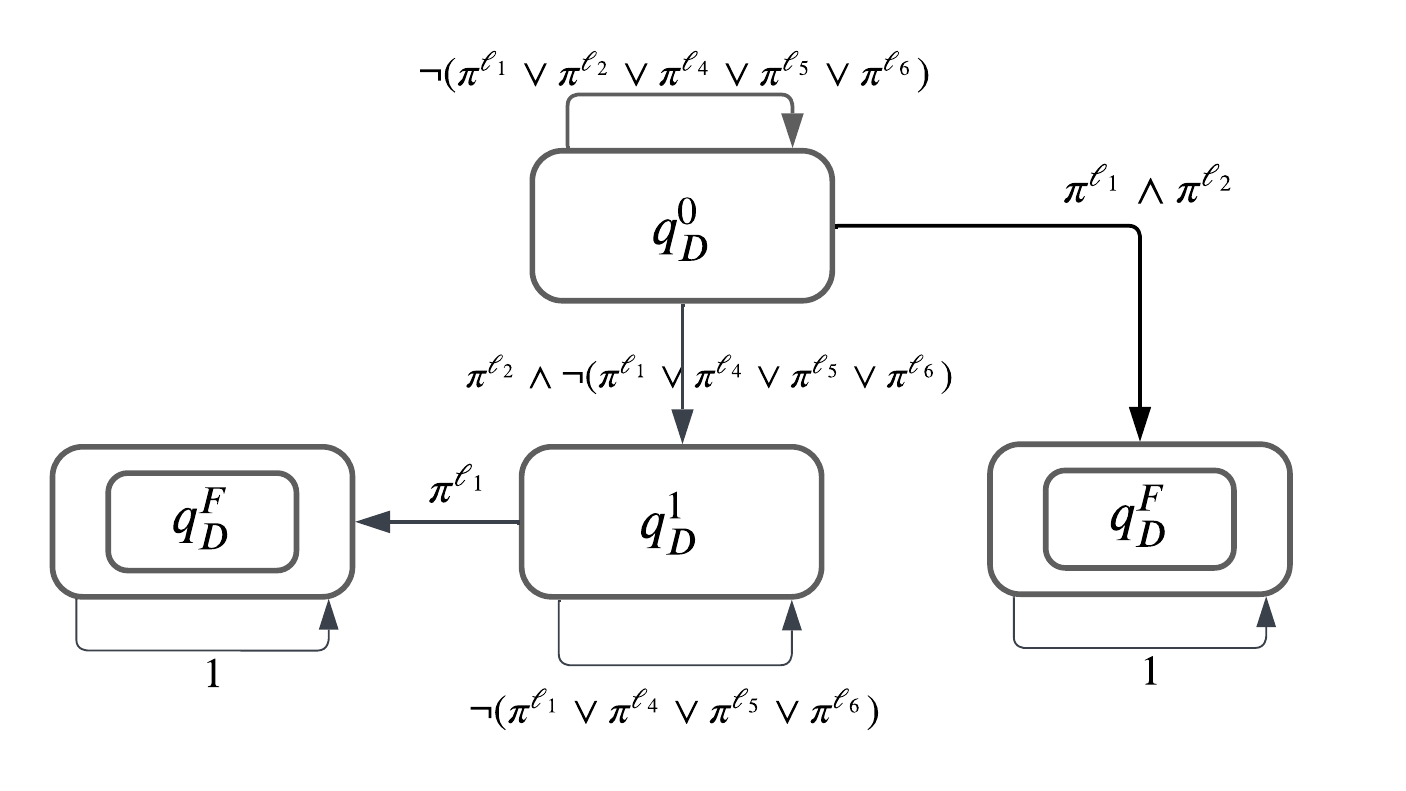}
    \caption{Illustration of the DFA pre-processing step, discussed in Section \ref{sec:vltl}, for the DFA shown in Fig. \ref{fig:example}. Notice in Fig. \ref{fig:example} that that there are two paths (excluding self-loops) to reach $q_D^F$ from $q_D^0$. 
    }
    \label{fig:pre_processing_dfa}
\end{figure}


Next, we apply a depth-first search (DFS) method over  $D'$ to see if $q_D^F$ can be reached from $q_D^0$ through a sequence of DFA transitions that are verified to be safe. This process is summarized in Alg. \ref{algo2}. We note that the only difference with the standard DFS algorithm is that here reachability analysis is applied to check if a node $q_D$ can be reached from another node $q_D'$.
The inputs to Alg. \ref{algo2} are the graph $D'$, the set $\Xi$ of NN controllers and the system dynamics/simulator \eqref{eq:dynamics} [line 1, Alg. \ref{algo2}]. In what follows, we describe in detail how DFS is applied.

Let $q_D^{\text{cur}}$ denote the currently visited node in $D'$. We initialize $q_D^{\text{cur}}$ as $q_D^{\text{cur}}=q_D^0$. The set of all possible states in $\ccalX$ that the system can be when it reaches $q_D^{\text{cur}}$ is denoted by $\ccalX_{q_D^{\text{cur}}}^0$. We initialize $\ccalX_{q_D^{\text{cur}}}^0=\ccalX_0$. As required in the DFS algorithm, we define a set $\ccalV_{\text{vis}}$ that collects all nodes in $D'$ that have been visited and a sequence $\bbq$ of nodes that points to the current path from $q_D^0$ towards $q_F$. They are initialized as $\ccalV_{\text{vis}}=\{q_D^0\}$ and $\bbq=q_D^0$. Also we initialize the control strategy $\boldsymbol\xi$ as an empty sequence. We also define a function $g:\ccalV\rightarrow \ccalX$ that maps a DFA state $q_D\in\ccalV$ to the corresponding set $\ccalX_{q_D}^0$. This function that is constructed on-the-fly is needed only as way to store and recover from memory the sets $\ccalX_{q_D}^0$ [lines 2-3, Alg. \ref{algo2}]. 
Given  $q_D^{\text{cur}}$, we randomly select a next state $q_{\text{next}}\in \ccalR_{q_D^{\text{cur}}}$. Then we apply reachability analysis over the transition from $q_D^{\text{cur}}$ to $q_D^{\text{next}}$ using the NN controllers $\Xi_{q_D^{\text{cur}}\rightarrow q_D^{\text{next}}}\subseteq\Xi$ and the initial set of states $\ccalX_{q_D^{\text{cur}}}^0$ [lines 5-6, Alg. \ref{algo2}]; see Section \ref{sec:VerReach}. 
If the transition is verified to be safe, then we append $q_D^{\text{cur}}$ to $\bbq$. Also, we append the controller $\xi\in\Xi_{q_D^{\text{cur}}\rightarrow q_D^{\text{next}}}$ for which this transition is safe to $\boldsymbol\xi$ that denotes the current control strategy to reach $q_D^{\text{next}}$ from $q_D^0$ [lines 7-8, Alg. \ref{algo2}]. The corresponding horizon $H_{q_D^\text{cur}}$ should be stored as well, as it determines for how long the controller should be applied for; we abstain from this for simplicity of presentation.
The final reachable set $\bar{\ccalR}_{H_{q_D^{\text{cur}}}}(q_D^{\text{cur}},\xi, V)$ becomes the set $\ccalX_{q_D^{\text{next}}}^0$; see Ex. \ref{ex4} and Fig. \ref{fig:reach} as well [line 8, Alg. \ref{algo2}]. Also, $g(q_D^{\text{next}})$ is constructed on-the-fly as $g(q_D^{\text{next}})=\ccalX_{q_D^{\text{next}}}^0$ and we replace $q_D^{\text{cur}}$ with $q_D^{\text{next}}$ [line 9, Alg. \ref{algo2}].
If the transition $q_D^{\text{cur}}\rightarrow q_D^{\text{next}}$ is not verified to be safe, we remove $q_D^{\text{next}}$ from the set $\ccalR_{q_D^{\text{cur}}}$  (see \eqref{eq:R}). Then we remove the last element in $\bbq$ and $\boldsymbol\xi$, denoted by $\bbq(\text{end})$ and $\boldsymbol\xi(\text{end})$, while $\bbq(\text{end})$ is assigned to $q_D^{\text{cur}}$ until we find another state in $\ccalR_{q_D^{\text{cur}}}$ that has not been visited yet [lines 12-16, Alg. \ref{algo2}].
%
%
The above process is repeated until $q_D^{\text{cur}}$ is updated to be $q_D^F$. In this case, we have found a NN control strategy $\boldsymbol\xi$ that satisfies $\phi$ for all initial states $\bbx_0\in\ccalX_0$ [line 10-11, Alg. \ref{algo2}]. If $\bbq$ is empty and we fail to find another state in $\ccalR_{q_D^{\text{cur}}}$ that is not visited, then the proposed method cannot find a feasible path from $q_D^0$ to $q_D^F$ (even though it may exist; see Section \ref{sec:complete}) [lines 17-18, Alg. \ref{algo2}].
Any other graph-search method in conjunction with reachability analysis can be used as well.

\begin{exmp}[Verified Compositional Control Synthesis
%
]
\label{ex4}
We continue Ex. \ref{ex3}; see also Fig. \ref{fig:reach}. To verify the DFA transition from $q_D^{\text{cur}}=q_D^1$ to $q_D^{\text{next}}=q_D^F$, we initialize $\ccalX_{q_D^1}^0=\bar{\ccalR}_3(\ccalX_0,\xi_2,V)$. 
By computing $\bar{\ccalR}_t(\ccalX_{q_D^1}^0,\xi_1,V)$, we verify that this DFA transition is safe. Thus, there exists $\boldsymbol\xi=\xi_2,\xi_1$ where $\xi_2$ and $\xi_1$ are applied for $3$ and $2$ time steps, respectively, so that 
$\mathbb{P}_{\tau(\bbx_0)\sim D}(\tau(\bbx_0)\models\phi|\boldsymbol\xi,\ccalX_0)=1$, 
%
%
%
for all $\bbx_0\in\ccalX_0$. 
\end{exmp}


\section{Correctness \& Completeness}\label{sec:complete}
In this section, first we show correctness of the proposed algorithm is probabilistically correct. 
Additionally, we provide conditions under which the proposed algorithm is complete and discuss trade-offs between completeness and computational efficiency.

\begin{prop}[Correctness]\label{prop1}
Algorithm \ref{algo1} is probabilistically correct, i.e., if it computes a NN control strategy $\boldsymbol\xi$ with horizon $F=\sum_{k=0}^KH_k$,  as defined in Definition \ref{def:NNcontrol}, then $\boldsymbol\xi$ satisfies $\phi$ with probability that is at least equal to $(1-\delta_M)^{F}$. 
\end{prop}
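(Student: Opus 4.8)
The plan is to trace through what the algorithm outputs and identify where probabilistic error can creep in. When Alg. \ref{algo1} returns \texttt{True} together with a strategy $\boldsymbol\xi = \mu(0),\dots,\mu(K)$, this strategy was assembled by the depth-first search of Alg. \ref{algo2} as a path $q_D^0 \to q_D^{(1)} \to \cdots \to q_D^{(K)} \to q_D^F$ through the (pre-processed) DFA, where each edge $q_D^{(k-1)} \to q_D^{(k)}$ was certified ``verified to be safe'' via the over-approximated reachable sets of \cite{lew2022simple} satisfying \eqref{eq:reachi2}--\eqref{eq:reachii2} with controller $\mu(k-1)$ and horizon $H_{k-1}$, starting from $\ccalX_{q_D^{(k-1)}}^0$. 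The total horizon is $F = \sum_{k=0}^K H_k$. I would first argue the ``ideal'' statement: if every over-approximated reachable set along this path were in fact a true over-approximation of the corresponding exact reachable set, then the composite run satisfies $\phi$ \emph{deterministically} (probability $1$), by chaining the reach-avoid guarantees. Then I would quantify the probability that this favorable event holds.

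First I would establish the ideal-case claim. Suppose, for every $k \in \{1,\dots,K\}$ and every $t \in \{0,\dots,H_{k-1}\}$, we have $\ccalR_t(\ccalX_{q_D^{(k-1)}}^0, \mu(k-1), V) \subseteq \bar{\ccalR}_t(\ccalX_{q_D^{(k-1)}}^0, \mu(k-1), V)$. Combined with \eqref{eq:reachi2}--\eqref{eq:reachii2}, this gives $\ccalR_t \subseteq \ccalX_{q_D^{(k-1)}\to q_D^{(k-1)}}$ for $t < H_{k-1}$ and $\ccalR_{H_{k-1}} \subseteq \ccalX_{q_D^{(k-1)}\to q_D^{(k)}}$. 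By construction of these sets in \eqref{eq:X} and the semantics of the DFA (Section \ref{sec:DFA}), this means: along any realization $\tau(\bbx_0)$, while $\mu(k-1)$ is applied the DFA either self-loops at $q_D^{(k-1)}$ (for the first $H_{k-1}-1$ steps) and then transitions to $q_D^{(k)}$ at step $H_{k-1}$. Crucially, the set $\ccalX_{q_D^{(k)}}^0$ fed into the next stage is exactly $\bar{\ccalR}_{H_{k-1}}$, which (under the favorable event) contains all states the system can actually be in upon entering $q_D^{(k)}$; so the reach-avoid guarantee for stage $k$ applies to the true state distribution. Induction over $k = 1,\dots,K$ then shows the word $L(\bbx_0)L(\bbx_1)\cdots L(\bbx_F)$ drives the DFA from $q_D^0$ to $q_D^F$, i.e. $\tau(\bbx_0) \models \phi$, for every realization and every $\bbx_0 \in \ccalX_0$.

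Next I would bound the probability of the favorable event. By \cite{lew2022simple}, each single reachable-set estimate is a valid over-approximation with probability at least $1-\delta_M$, and as stated in the excerpt, over a horizon $H$ the probability that \emph{all} estimates $\bar{\ccalR}_1,\dots,\bar{\ccalR}_H$ are valid is at least $(1-\delta_M)^H$. The strategy $\boldsymbol\xi$ uses reachable sets at times $t = 1,\dots,H_{k-1}$ within stage $k$; concatenating over all $K+1$ stages, the total number of reachable-set estimates that must simultaneously be valid is $\sum_{k=0}^K H_k = F$. Hence the favorable event has probability at least $(1-\delta_M)^F$, and on that event $\mathbb{P}_{\tau(\bbx_0)\sim D}(\tau(\bbx_0)\models\phi \mid \boldsymbol\xi, \ccalX_0) = 1$; combining, the unconditional satisfaction probability is at least $(1-\delta_M)^F$.

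The main obstacle I anticipate is the inductive bookkeeping in the ideal-case step: one must be careful that the reach-avoid property is \emph{compositional} in the right way — namely, that the over-approximated terminal set $\bar{\ccalR}_{H_{k-1}}$ of stage $k-1$ is precisely the initial set $\ccalX_{q_D^{(k)}}^0$ of stage $k$ (this is line 8 of Alg. \ref{algo2}), and that the DFA is deterministic so the symbol sequence produced by any realization has a unique run. A secondary subtlety is the handling of DFA states without self-loops (where $\ccalX_{q_D\to q_D}$ is undefined and $H_{q_D}=1$), and the fact that pre-processing $D$ into $D'$ preserves the language; these are minor but should be acknowledged so the induction covers every edge type. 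The probability accounting itself is then a routine product bound, assuming independence (or at least the union-bound-free product form asserted in \cite{lew2022simple}) across the $F$ reachable-set computations.
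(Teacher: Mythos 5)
Your proposal follows essentially the same argument as the paper's proof: you chain the verified reach-avoid conditions \eqref{eq:reachi2}--\eqref{eq:reachii2} along the DFA path returned by Alg.~\ref{algo2}, observe that whenever the over-approximations contain the true reachable sets the conditions \eqref{eq:condI}--\eqref{eq:condII} force the DFA to progress to $q_D^F$ with probability one, and then multiply the per-step over-approximation guarantee of \cite{lew2022simple} across all $F$ steps to obtain the $(1-\delta_M)^F$ bound. Your explicit induction on stages (using that $\ccalX_{q_D^{(k)}}^0$ is exactly the terminal over-approximated set of the previous stage) merely spells out what the paper's proof states by construction of Alg.~\ref{algo2}, so the two arguments coincide.
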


\begin{proof}
This result holds by construction of Alg. \ref{algo1}. 
To show this, assume that Alg. \ref{algo2} returns a control strategy $\boldsymbol\xi$. Each feedback controller $\mu(k)$ in $\boldsymbol\xi$ is applied for $H_k\geq 1$ time-steps, i.e., $\mu(k)$ is applied for all time instants $t\in[t_k,t_k+H_k]$, where $t_0=0$ and $t_k=\sum_{n=0}^{k-1}H_{n}$ for all $k\geq 1$.
Consider the sequence of the computed reachable sets that correspond to $\boldsymbol\xi$, generated after applying every $\mu(k)$ in $\boldsymbol\xi$ for $H_k$ time units for $k=0,1,\dots,K$ in this specific order. By construction of the algorithm, these reachable sets satisfy \eqref{eq:reachi2}-\eqref{eq:reachii2}. 
\textcolor{black}{If the computed reachable sets over-approximate the true ones, then satisfaction of \eqref{eq:reachi2}-\eqref{eq:reachii2} implies that the conditions \eqref{eq:condI}-\eqref{eq:condII} hold as well.}
\textcolor{black}{The condition in \eqref{eq:condI}} implies that, given a fixed $k$, when the  system applies $\mu(k)$ for all time instants $t\in[t_k,t_k+H_k-1]$, it remains in a DFA state, denoted by $q_D^k$ with probability one for all $t\in[t_k,t_k+H_k-1]$ (when  $k=0$ this DFA state is the initial one). In other words, the system, driven by $\mu(k)$, generates observations (i.e., $L(\bbx_t)$) that allow it to stay in  $q_D^k$ for all $t\in[t_k,t_{k}+H_k-1]$. 
\textcolor{black}{The condition in \eqref{eq:condII}} implies that at time $t_{k+1}=t_k+H_k$, a new DFA state, denoted by $q_D^{k+1}$ will be reached with probability one.\footnote{Note that if $H_k=1$, this means that the system will stay at $q_D^k$ only at the time instant $t_k$. This can be the case e.g.,  because there is no self-loop at $q_D^k$ or because the self-loop transition at $q_D^k$ is not activated once the system reaches this DFA state.} The above holds for all $k=0,1,\dots,K$, and, therefore, a new DFA state $q_D^k$ is reached with probability one at every time instant $t_k$. This results in a sequence of DFA states $\bbq=q_D^0,q_D^1,\dots,q_D^k,\dots,q_D^K$.
By construction of Alg. \ref{algo2}, a policy $\boldsymbol\xi$ is returned when the final DFA state is included in $\bbq$. Thus, at time instant $t_K$ the DFA state $q_D^K$ that is reached is the final one. As a result, the resulting policy ensures that if the system executes the control strategy $\boldsymbol\xi$, then a sequence of DFA transitions leading to the final state will be activated with probability one only if the computed reachable sets over-approximate the true reachable sets. Since the latter holds with probability of at least $(1-\delta_M)^F$ (see Sec.\ref{sec:VerReach}), this equivalently means that the LTL formula is satisfied with probability that is at least equal to $(1-\delta_M)^F$ completing the proof.
\end{proof}

\begin{remark}[Computational Efficiency vs Completeness]\label{rem:complete}
In general, our method is not complete in the sense that it may not find a control strategy $\boldsymbol\xi$ that satisfies $\phi$ with probability one, for all $\bbx_0\in\ccalX_0$, even though such a strategy exists. This is due to the fact that (a) Alg. \ref{algo2} computes over-approximated (instead of exact) reachable sets and that (b) it does not exhaustively search over all possible combinations of NN control actions that the system can apply when  a new DFA state is reached. 
To elaborate more on (b), consider the following cases. (b.i) \textit{Given an initial set of states for $q_D^{\text{cur}}$}, Alg. \ref{algo2} reasons about safety of a transition from $q_D^{\text{cur}}$ to $q_D^{\text{next}}$, using only controllers selected from $\Xi_{q_D^{\text{cur}}\rightarrow q_D^{\text{next}}}$. If this transition is unsafe, then it is discarded. However, this transition may become feasible if the initial set of states changes, which can happen by applying a controller from $\Xi_{q_D^{\text{cur}}\rightarrow q_D^{\text{cur}}}$ for some $\hat{H}$ time steps. Searching over all possible values of $\hat{H}$ that may result in safe transitions is computationally intractable as $\hat{H}$ is not upper bounded. This can possibly be alleviated by considering bounded LTL formulas \cite{latvala2004simple}.
(b.ii) Additionally, as soon as Alg. \ref{algo2} finds a $\xi\in\Xi_{q_D^{\text{cur}}\rightarrow q_D^{\text{next}}}$ for which the corresponding transition is safe, it proceeds to new DFA transitions. However, that transition may be safe for other controllers in $\xi\in\Xi_{q_D^{\text{cur}}\rightarrow q_D^{\text{next}}}$ as well, where each one yields a different initial set for subsequent DFA transitions affecting their safety. This can be tackled by exhaustively searching over all possible controllers that the system can apply to enable the transition from $q_D^{\text{cur}}$ to $q_D^{\text{next}}$.
Addressing the challenges discussed in (b.i)-(b.ii) will increase the computational cost of Alg. \ref{algo2}. 
Similar trade-offs between completeness and computational efficiency are quite common in related works; see e.g., \cite{leahy2022fast}.
The proposed method in its current form is complete if (i) there are no self-loops in the DFA states, or if $\Xi_{q_D\rightarrow q_D}=\emptyset$ for all states (see e.g., Ex. \ref{ex3}); (ii) $|\Xi_{q_D\rightarrow q_D'}|=1$, for all $q_D\in\ccalQ_D\setminus\{q_D^F\}$; (iii) the reachable sets are accurately computed. Conditions (i)-(ii) trivially address (b.i)-(b.ii) while (iii) trivially tackles (a).
\end{remark}

\section{Experiments}\label{sec:sim}

In this section, we demonstrate our method on complex navigation tasks for ground robots. In Section \ref{sec:dymcon}, we define the dynamics of the considered robots and we discuss how we design the sets $\Xi$ of NN controllers. Using this set of base controllers, in Sections \ref{sec:grobot}-Section \ref{sec:hware_exp}, we design new composite controllers that can accomplish more complex tasks, with temporal and logical requirements, in a zero-shot fashion. 
Particularly, in Section \ref{sec:grobot} we illustrate the performance of the proposed algorithm in simulated settings. Next, in Section \ref{sec:hware_exp}, we present hardware experiments that demonstrate the benefits and limitations of our approach with respect to the sim2real gap. 
All simulations have been run on a computer with 64 GB RAM and a GeForce RTX 3080 graphic card.

\subsection{Experiment Setup: Robot Dynamics, NN Control Design, and Reachability Analysis}\label{sec:dymcon}

\noindent
\textbf{Ground Robot Dynamics:}
We consider a ground robot with the following  differential drive
dynamics:
\begin{equation}\label{eq:grob}
\begin{bmatrix}x_{t+1}^1\\x_{t+1}^2\\ \theta_{t+1}\end{bmatrix}
=
\begin{bmatrix}x_t^1\\x_t^2\\ \theta_t\end{bmatrix}
+
\begin{bmatrix}
u_1 \sinc(\frac{u_2\tau}{2})\cos(\theta_t+\frac{u_2\tau}{2})\\
u_1 \sinc(\frac{u_2\tau}{2})\sin(\theta_t+\frac{u_2\tau}{2})\\
\tau u_2
\end{bmatrix}
+
\begin{bmatrix}\nu_1\\ \nu_2\\ \nu_3 \end{bmatrix}
\end{equation}

The ground robot state is defined as $\bbx=[x_t^1;x_t^2;\theta_t]\in \mathbb{R}^3$ capturing the position and direction of the robot at time $t$. The vector control input $\bbu\in \mathbb{R}^2$ consists of the linear velocity $u_1$ and angular velocity $u_2$, where $u_1\in [-0.22,0.22]$ and $u_2 \in [-0.15,0.15]$ for any $t>0$. 
The noise signal $\boldsymbol\nu=[\nu_1; \nu_2; \nu_3] \in \mathbb{R}^3$ is bounded such that $\forall i \in \{1,\dots,3\}, \nu_i \in [-0.002, 0.002]$.

\noindent
\textbf{NN Control Design:}
Next, we discuss how we train the NN controllers for the ground robot using Model Predictive Control (MPC) methods. As it will be discussed later, the LTL tasks require the robot to reach/avoid disjoint regions interests $\ell_i\in\mathbb{R}^2$ that belong to the physical workspace. 
To train a NN controller $\xi_i$, first we generate a random set of states $\bbx\in\mathbb{R}^3$. Starting from each one of these states, we generate a sequence of pairs of states and control inputs that drive the robot towards the interior of $\ell_i$ using an off-the-shelf MPC solver \cite{gill2005snopt}. Using the resulting training dataset, we train a feedforward NN controller $\xi_i$ with $2$ hidden layers, $40$ neurons per layer and ReLU activation functions that imitates  MPC. The performance of trained NN controllers can be found in Appendix \ref{sec:appendix_detail}. 

\vspace{-0.1cm}
\subsection{Numerical Simulations}\label{sec:grobot}
\vspace{-0.1cm}

\begin{figure}[t]
    \centering
    \begin{subfigure}[b]{0.5\linewidth}
        \includegraphics[width=\textwidth]{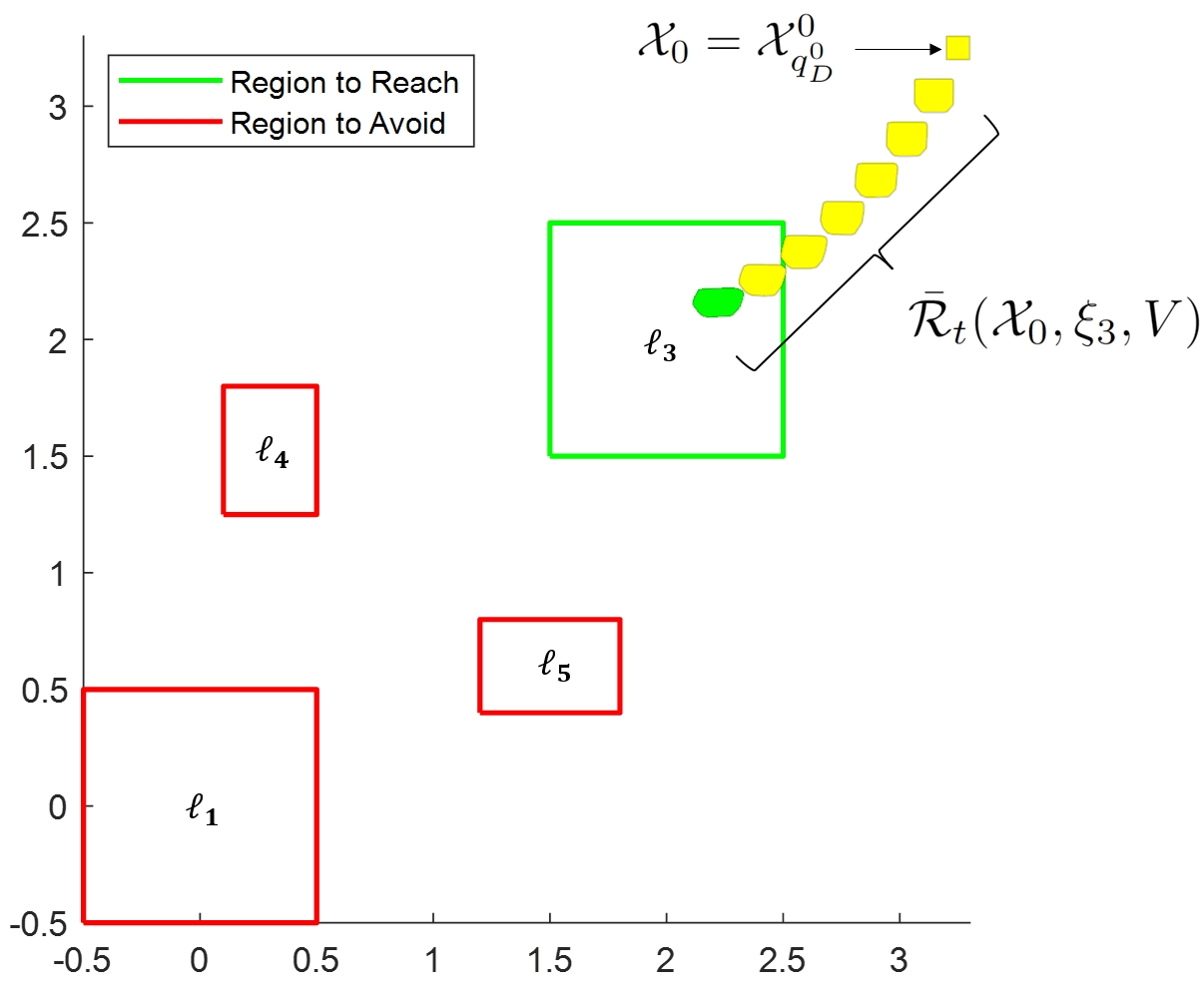}
        \caption{$q_D^0\rightarrow q_D^1$ ($\epsilon=0.03$)}
        \label{fig:g1suc}
    \end{subfigure}
    \begin{subfigure}[b]{0.45\linewidth}
        \includegraphics[width=\textwidth]{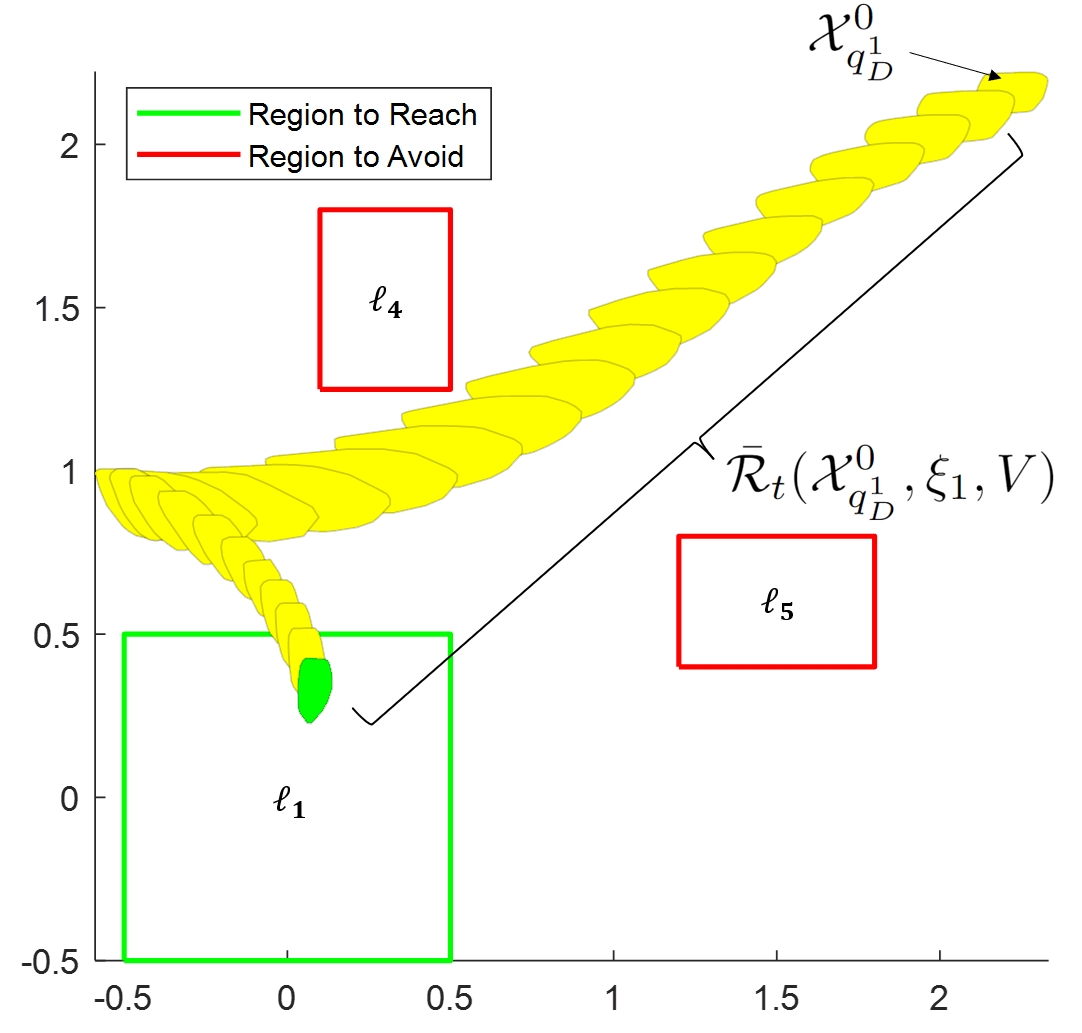}
        \caption{$q_D^1\rightarrow q_D^F$ ($\epsilon=0.03$)}
        \label{fig:g1suc2}
    \end{subfigure}
    \caption{Verification of DFA transitions for Section \ref{sec:grobot} ($M=600,000$) . 
The regions of interest that need to be avoided/reached to enable this DFA transition are shown with red/green color. All other regions are not shown. All reachable sets that are outside/inside the regions to reach are shown with yellow/green color. 
}\label{fig:NumReach}
\end{figure}

\begin{figure}[t]
    \centering
    \begin{subfigure}[b]{0.53\linewidth}
        \includegraphics[width=\textwidth]{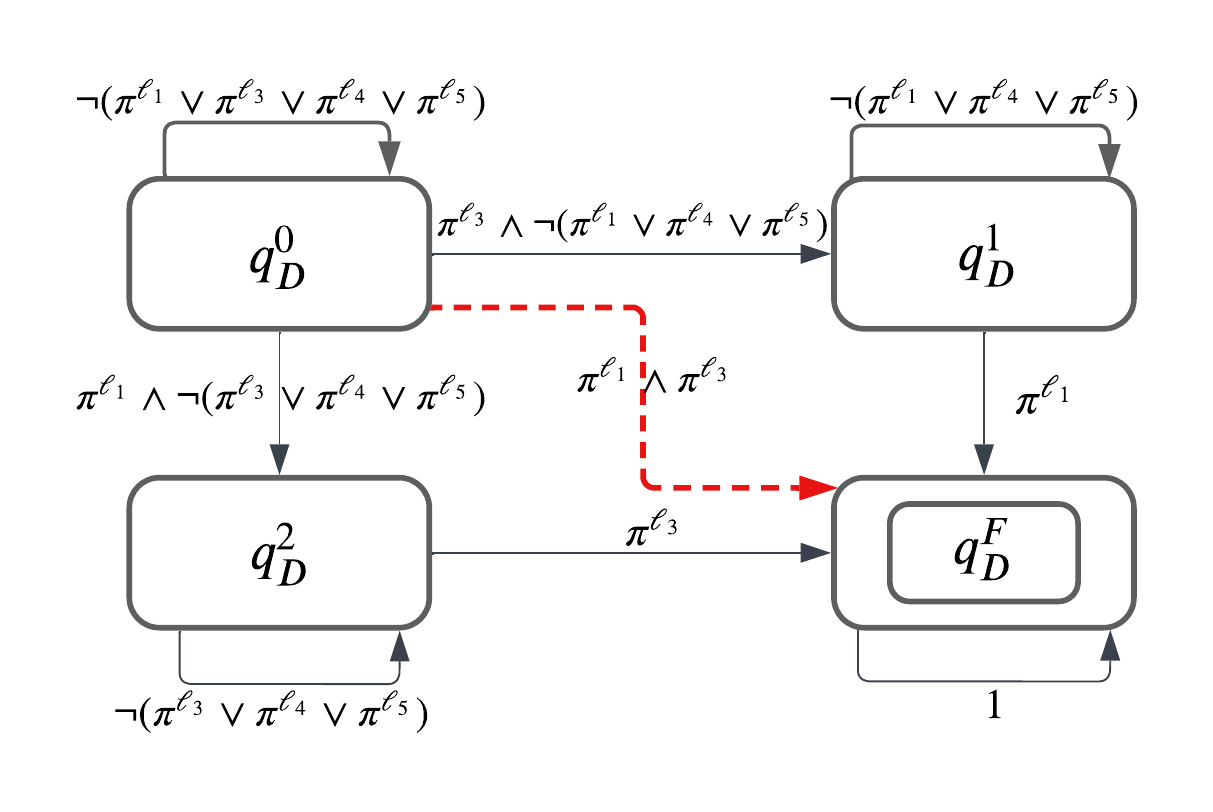}
        \caption{}
        \label{fig:g1first}
    \end{subfigure}
    \begin{subfigure}[b]{0.44\linewidth}
        \includegraphics[width=\textwidth]{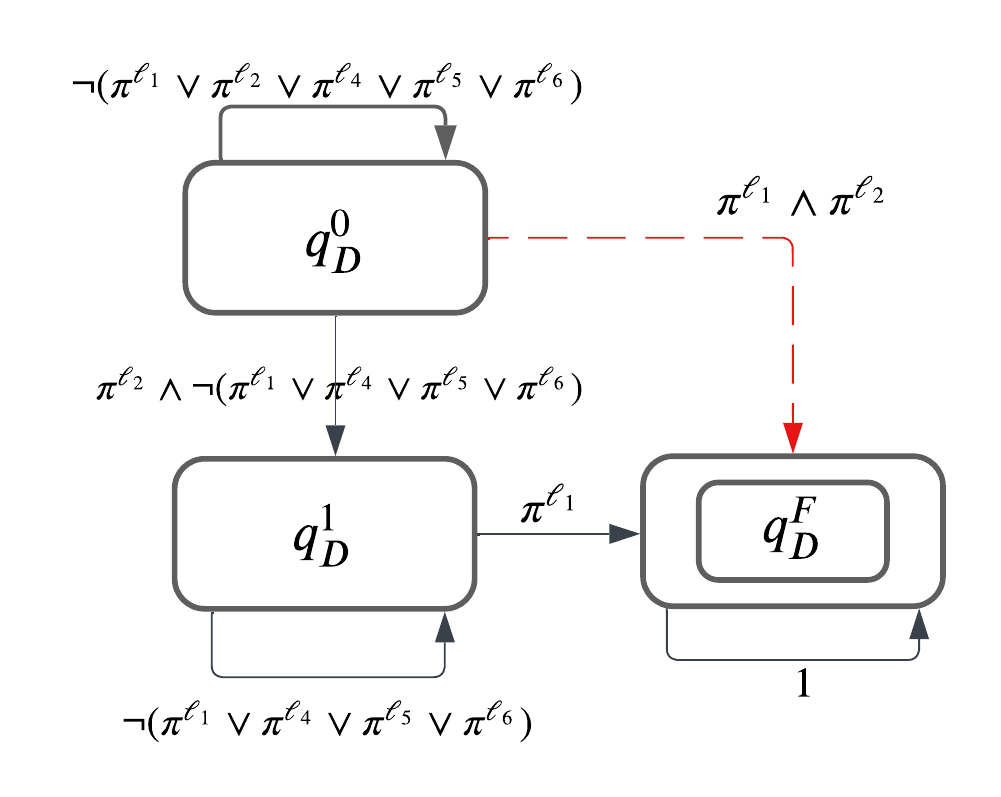}
        \caption{}
        \label{fig:hard_dfa}
    \end{subfigure}
    \caption{(a) DFA for the numerical experiment; (b) DFA for hardware experiment. The red dashed transition corresponds to an infeasible transition. }\vspace{-0.4cm}
\end{figure}

In what follows, we demonstrate the proposed algorithm in an LTL-encoded navigation task. Additional simulations can be found in Appendix \ref{sec:appendix_extra}. Specifically, we consider a ground robot residing in an environment shown in Fig. \ref{fig:NumReach} and 
%
%
tasked with the following LTL formula 
\begin{equation}\label{eq:LTL1}
    \phi=(\Diamond\pi^{\ell_1})\wedge (\Diamond\pi^{\ell_3}) \wedge (\neg (\pi^{\ell_4} \vee \pi^{\ell_5} )\ccalU \pi^{\ell_1}) \wedge (\neg (\pi^{\ell_4} \vee \pi^{\ell_5} )\ccalU \pi^{\ell_3})\nonumber.
\end{equation}
This task requires the robot to eventually visit the regions $\ell_1$ and $\ell_3$, in any order, while avoiding the obstacles $\ell_4$ and $\ell_5$. 
This LTL formula corresponds to a DFA with 4 states and 5 transitions as shown in Fig. \ref{fig:g1first}. Notice that the transition from $q_D^0$ to $q_D^{F}$ is pruned and, therefore, not considered during verification, since it requires the robot to be present in more than one region simultaneously \cite{kantaros2020stylus}. 
%
The initial set $\ccalX_0$ is defined as $\{ \bbx_0 \in \ccalX | x_0^1, x_0^2 \in [3.2, 3.3], \theta_0 \in [4.24, 4.26] \}$.
%
To compute the reachable sets, we apply $\epsilon$-RandUP using $M=600,000$ sampled points for the initial set of states with padding size of $\epsilon=0.03$. This guarantees that each reachable set at $t\geq 1$ is the outer-approximation of its true reachable set with a probability greater than $0.9996$ given $\ccalX_0$ \cite{lew2022simple}.  
Our method, first, investigates the DFA transition from $q_D^0$ to $q_D^1$.
%
%
This transition requires the ground robot to stay within the obstacle-free space (i.e., avoid the obstacles $\ell_1$, $\ell_4$ and $\ell_5$) and eventually reach $\ell_3$. \textcolor{black}{Therefore, we have that $\Xi_{q_D^0\rightarrow q_D^1} = \{\xi_3\}$.} The corresponding reachable sets for this DFA transition, constructed using the controller $\xi_3$, are shown in Figure \ref{fig:g1suc}. Observe that the reachable sets $\bar{\ccalR}_1(\ccalX_0,\xi_3, V),\dots, \bar{\ccalR}_6(\ccalX_0,\xi_3, V)$ are fully outside the obstacle regions (as required by the self-loop transition of $q_D^0$) while $\bar{\ccalR}_7(\ccalX_0,\xi_3, V)$ is fully inside $\ell_3$. Thus, the transition for $q_D^0$ to $q_D^1$ is verified to be safe with probability that is at least equal to $0.9996^7$. 
Once the DFA state $q_D^1$ is reached, its self-loop transition is satisfied since the system state is outside $\ell_1$, $\ell_4$ and $\ell_5$. 
Next, the DFA transition from $q_D^1$ to $q_D^F$ is considered requiring the robot to reach $\ell_1$ while avoiding the obstacle regions. \textcolor{black}{Thus, in this case, we have that $\Xi_{q_D^1\rightarrow q_D^F} = \{\xi_1\}$.}
The set of initial states for this transition is $\ccalX_{q_D^1}^0=\bar{\ccalR}_7(\ccalX_0,\xi_3, V)$. 
%
%
We again sample $M=600,000$ points within $\ccalX_{q_D^1}^0$ in our reachability analysis. 
%
After computing the reachable sets $\bar{\ccalR}_t(\ccalX_{q_D^1}^0,\xi_1, V)$ shown in Figure \ref{fig:g1suc2}, we can see that the reachable sets over next $25$ time steps are outside the obstacles while $\bar{\ccalR}_{26}(\ccalX_{q_D^1}^0,\xi_1, V)$ is inside $\ell_1$ verifying that this DFA transition is also safe with probability at least equal to $(0.9996)^{26}$. Thus, there exists a control strategy $\boldsymbol\xi=\xi_3,\xi_1$ where $H_1=7,H_2=26$ such that 
$\mathbb{P}_{\tau(\bbx_0)\sim D}(\tau(\bbx_0)\models\phi|\boldsymbol\xi,\ccalX_0)\geq(0.9996)^{33}=0.9869$.
%

Next, we discuss how the conservativeness of reachability analysis (arising due to overapproximation of reachable sets) can affect control design. To this end, we decrease the number of sampled points from $M=600,000$ to $M=50,000$. 
Using $M=50,000$, $\epsilon$-RandUP requires the padding size $\epsilon$ to increase from $0.03$ to $0.1$ to ensure that the generated reachable sets over-approximate the true ones with probability greater than $0.9999$. This essentially results in larger reachable sets which may affect the output of Algorithm \ref{algo1}. Specifically, using $\epsilon=0.1$, the transition from $q_D^0$ to $q_D^1$ is still verified to be safe. However, the transition from $q_D^1$ to $q_D^F$ is verified to be unsafe as the reachable sets intersect with the obstacles. The latter reachable sets are illustrated in Figure \ref{fig:g1los} in Appendix \ref{sec:appendix_detail}.
Notice that decreasing the number of sampled points can decrease the total runtime required for reachability analysis. Specifically,
the runtime to compute all reachable sets for verification 
%
%
when $M=50,000$ and $M=600,000$ is $12$ and $66$ seconds, respectively.
%
%
In summary, although decreasing the total number $M$ of sampled points can result in shorter runtimes, it increases the conservativeness of reachability analysis which may not allow Algorithm \ref{algo1} to compute a feasible controller even if it exists.

\subsection{Hardware Experiments}\label{sec:hware_exp}

\begin{figure}[t]
    \centering
    \begin{subfigure}[b]{0.41\linewidth}
        \includegraphics[width=\textwidth]{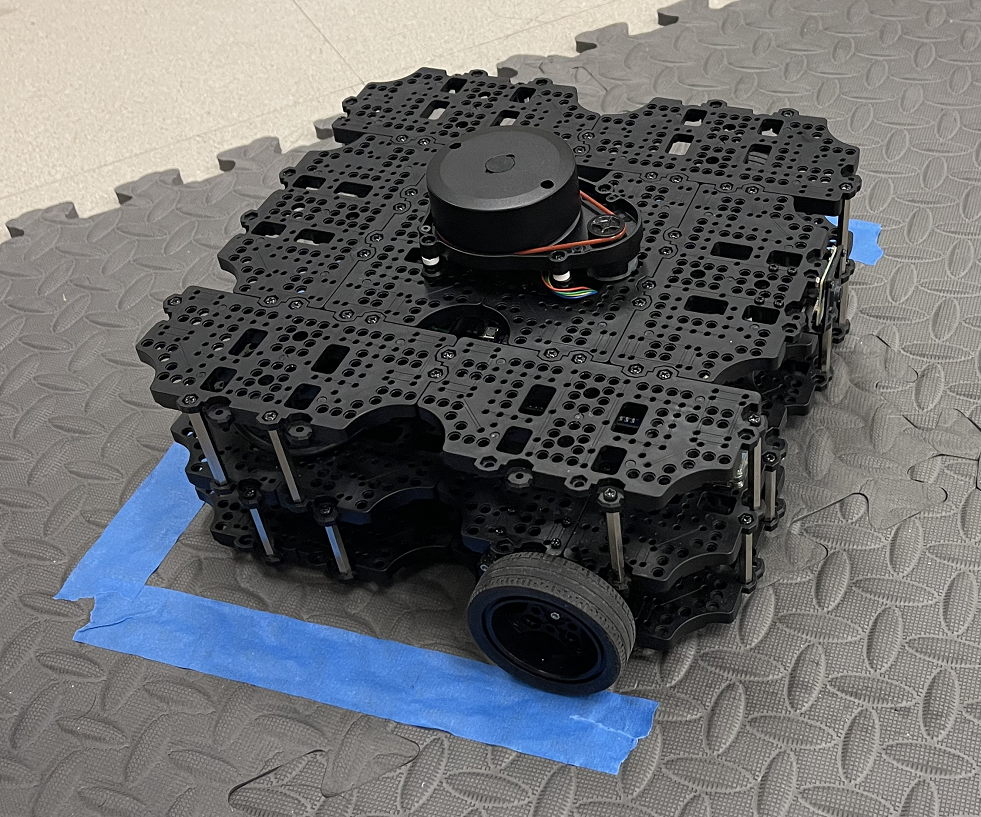}
        \caption{TurtleBot3 Waffle Pi}
        \label{fig:robot}
    \end{subfigure}
    \begin{subfigure}[b]{0.5\linewidth}
        \includegraphics[width=\textwidth]{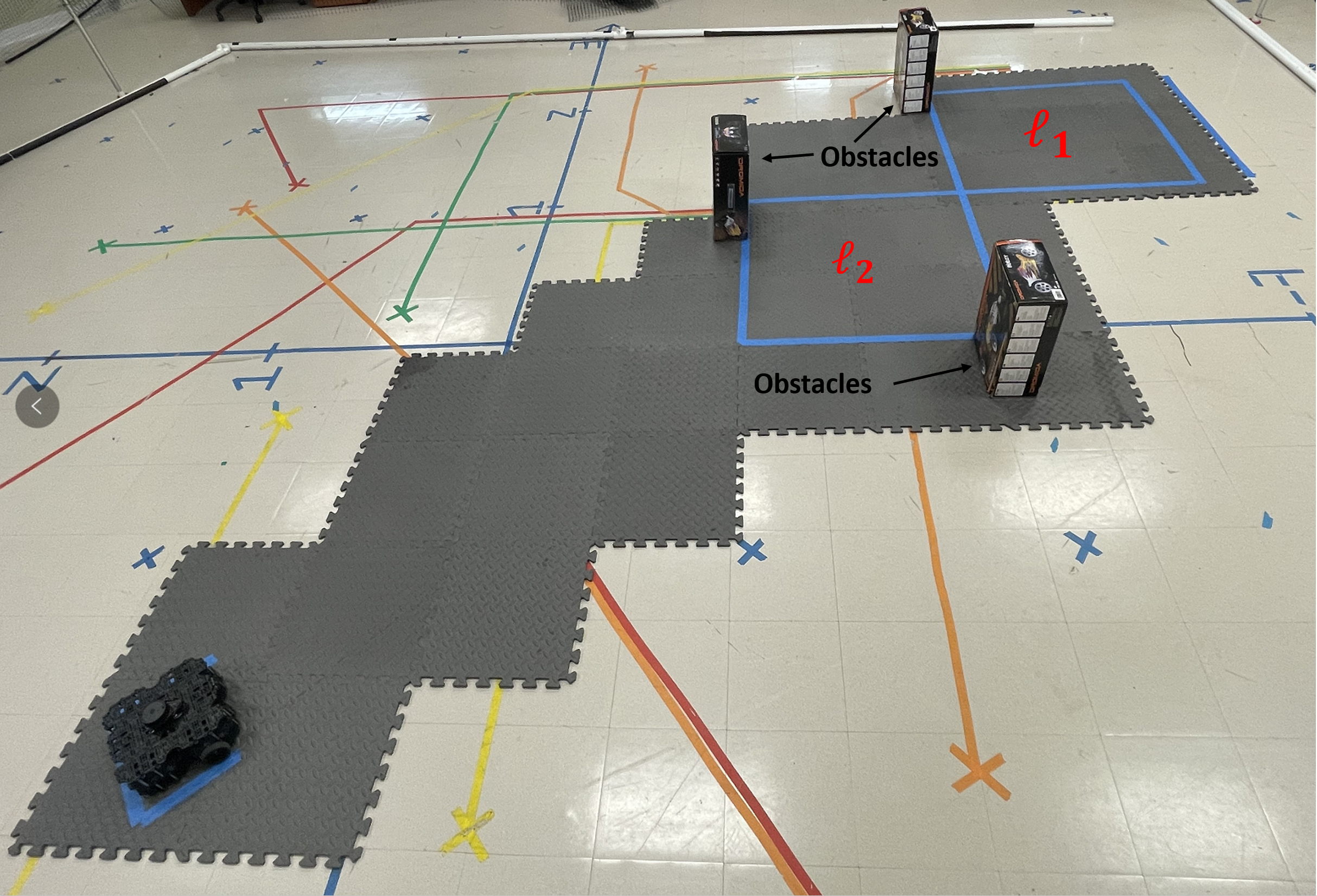}
        \caption{Experiment Environment}
        \label{fig:env}
    \end{subfigure}
    \caption{Turtlebot and environment setup for hardware experiment}\label{fig:expRobEnv}
\end{figure}

Next, we demonstrate the proposed algorithm on hardware experiments. Specifically, we consider a TurtleBot3 Waffle Pi ground robot operating in an indoor environment shown in Fig. \ref{fig:expRobEnv}. 
The robot is responsible for accomplishing the following LTL-encoded task 
\begin{align}
    &\phi=(\Diamond\pi^{\ell_1})\wedge (\Diamond\pi^{\ell_2}) \wedge (\neg \pi^{\ell_1} \ccalU \pi^{\ell_2}) \wedge \nonumber\\&(\neg (\pi^{\ell_4} \vee \pi^{\ell_5} \vee \pi^{\ell_6})\ccalU \pi^{\ell_1}) \wedge (\neg (\pi^{\ell_4} \vee \pi^{\ell_5} \vee \pi^{\ell_6})\ccalU \pi^{\ell_2}), \nonumber
\end{align}
requiring it to visit region $\ell_2$ and $\ell_1$ in this  order while avoiding obstacles $\ell_4$, $\ell_5$ and $\ell_6$. This formula corresponds to the DFA shown in Fig. \ref{fig:hard_dfa}. In what follows, we consider the same NN controllers used for the numerical experiments in Section \ref{sec:grobot}. 
We apply $\epsilon$-RandUP using $\epsilon=0.03$ and $M=600,000$. As discussed in Section \ref{sec:grobot}, this ensures that the constructed reachable sets over-approximate the true ones with a probability greater than  $0.9996$. The set $\ccalX_0$ is designed so that it collects all possible robot states with $x^1_0\in[2.95, 3.05]$, $x^2_0\in[2.95, 3.05]$, and $\theta_0\in[3.92, 3.94]$. To account for the robot size, in our reachability analysis, we shrink and inflate the regions of interest and obstacles, respectively, by the radius of the robot.

Our method, first, investigates DFA transition from $q_D^0$ to $q_D^1$. This transition requires the ground robot to reach region $\ell_2$ while avoiding regions $\ell_1$, $\ell_4$, $\ell_5$, and $\ell_6$. \textcolor{black}{Thus, we have $\Xi_{q_D^0\rightarrow q_D^1} = \{\xi_2\}$.} The reachable sets $\bar{\ccalR}_t(\ccalX_0,\xi_2, V)$, corresponding to this DFA transition, 
are fully outside the obstacle regions over the next 22 time steps (satisfying the self-loop transition of $q_D^0$) and go fully inside $\ell_2$ at the $23$ time step; see Fig. \ref{fig:g3_1} in Appendix \ref{sec:appendix_detail}. As a result, this DFA transition is verified to be safe with probability at least equal to $0.9996^{23}$.
%
Next, we investigate the transition from $q_D^1$ to $q_D^F$ which is activated if the robot avoids $\ell_1$, $\ell_4$, $\ell_5$, and $\ell_6$ until it reaches $\ell_1$ \textcolor{black}{where  $\Xi_{q_D^1\rightarrow q_D^F} = \{\xi_1\}$.} The set of initial states for this transition will be $\ccalX_{q_D^1}^0=\bar{\ccalR}_{23}(\ccalX_0,\xi_2,V)$ where we sample $M=600,000$ points again within $\ccalX_{q_D^1}^0$. 
The reachable sets over the next $19$ time steps are outside the obstacles and reach $\ell_1$ at time step $20$; see Fig. \ref{fig:g3_2} in Appendix \ref{sec:appendix_detail}. 
%
%
%
Hence, the DFA transition from $q_D^1$ to $q_D^F$ is also verified with probability that is at least equal to $0.9996^{20}$. 
Using these \textit{simulated} results, the robot will satisfy the assigned LTL task if it applies controller $\xi_2$ for $23$ time units and then $\xi_1$ for $20$ time units 
with probability
$\mathbb{P}_{\tau(\bbx_0)\sim D}(\tau(\bbx_0)\models\phi|\boldsymbol\xi,\ccalX_0)\textcolor{black}{\geq}(0.9996)^{23+20}=0.9829$.


To validate this result in real-world settings, we select at test time $11$ initial robot states and apply the designed control strategy $\boldsymbol{\xi}$ on the actual robot platform. 
In our experiments, the robot localizes itself using existing SLAM algorithms \cite{thale2020ros}.
The robot trajectories show that the LTL task is indeed satisfied for all selected initial states $\bbx_0\in\ccalX_0$; see Figures \ref{fig:g3_1} and \ref{fig:g3_2} in Appendix \ref{sec:appendix_detail}. Observe also in these figures (especially in Fig. \ref{fig:g3_2}) that some robot trajectories are slightly outside the reachable sets. We conjecture that this happens due to potential sim-to-real gap arising from imperfect robot localization and model inaccuracies; recall that the controller $\boldsymbol{\xi}$ is designed using a simulator of the robot dynamics as defined in \eqref{eq:grob}.

Next, we further investigate how the sim-to-real gap may affect robot performance with respect to satisfaction of the LTL task.
Specifically, to simulate a hard-to-model source of uncertainty in the environment, we put oil on one of the robot wheels. We select the initial state of the robot to be $\bbx_0=[3; 3; 3.925]$.
%
%
%
%
Notice that the upper bound for the noise $V=0.002$, used in our simulations, may not be enough to capture such a potentially unknown source of uncertainty. This may violate the probabilistic guarantees of the controller $\boldsymbol\xi$ designed above. The latter was validated in our experiments as the robot trajectory was outside the reachable sets, computed earlier using simulated data, leading to collision with the obstacle $\ell_4$; see Fig. \ref{fig:g3_4} in Appendix \ref{sec:appendix_detail}.
%
%

\section{Conclusion}

In this paper, we propose a new verified neuro-symbolic control approach for autonomous systems with stochastic and unknown dynamics with temporal logic tasks. The proposed method enables systems to compose base skills, modeled by NN controllers, to verifiably satisfy unseen and complex tasks captured by LTL formulas. We evaluated the proposed method on robot navigation tasks though extensive experiments. 

\bibliographystyle{IEEEtran}
\bibliography{YK_bib.bib}

\clearpage

\begin{appendices}

\section{Additional Details for Experiments of Section \ref{sec:sim}}
\label{sec:appendix_detail}

In this Appendix, we provide additional details regarding the experimental validation discussed in Section \ref{sec:sim}.



\vspace{-0.2cm}
\subsection{NN Controller Performance}
In Table \ref{tab:perf}, we report the performance of the NN controllers used throughout Section \ref{sec:sim} over a test set collecting initial system states. 
Specifically, we apply the NN controller over these initial test-time system states and we check if the system state can reach the desired region within the same horizon length as we used for the MPC solver. If so, we treat these test-time scenarios as successful. The accuracy reported in Table \ref{tab:perf} refers to the percentage of successful test-time scenarios. 
%

\begin{table}[!ht]
\centering
\begin{tabular}{|c|c|c|c|c|}
\hline
                              & Controller & Training Set      & Test Set      & Accuracy \\ \hline
\multirow{3}{*}{Ground Robot} & $\xi_1$                & \multirow{3}{*}{4800}  & \multirow{3}{*}{1200} & $99.8\%$ \\ \cline{2-2} \cline{5-5} 
                              & $\xi_2$                &                        &                       & $83.3\%$ \\ \cline{2-2} \cline{5-5} 
                              & $\xi_3$                &                        &                       & $79.2\%$ \\ \hline
\end{tabular}
\caption{NN Controller Performance}
\label{tab:perf}
\end{table}

\vspace{-0.2cm}
\subsection{Reachable Sets for Numerical Experiment}
Next, in Fig. \ref{fig:g1los}, we graphically illustrate the verification of DFA transitions corresponding to the task of Section \ref{sec:grobot} when $M=50,000$. Recall that to maintain the same probabilistic guarantees as in $M=600,000$, the padding size $\epsilon$ increased to $0.1$. This resulted in larger reachable sets compared to the ones in Fig. \ref{fig:g1suc}.


\begin{figure}[h]
    \centering
    \includegraphics[width=0.8\linewidth]{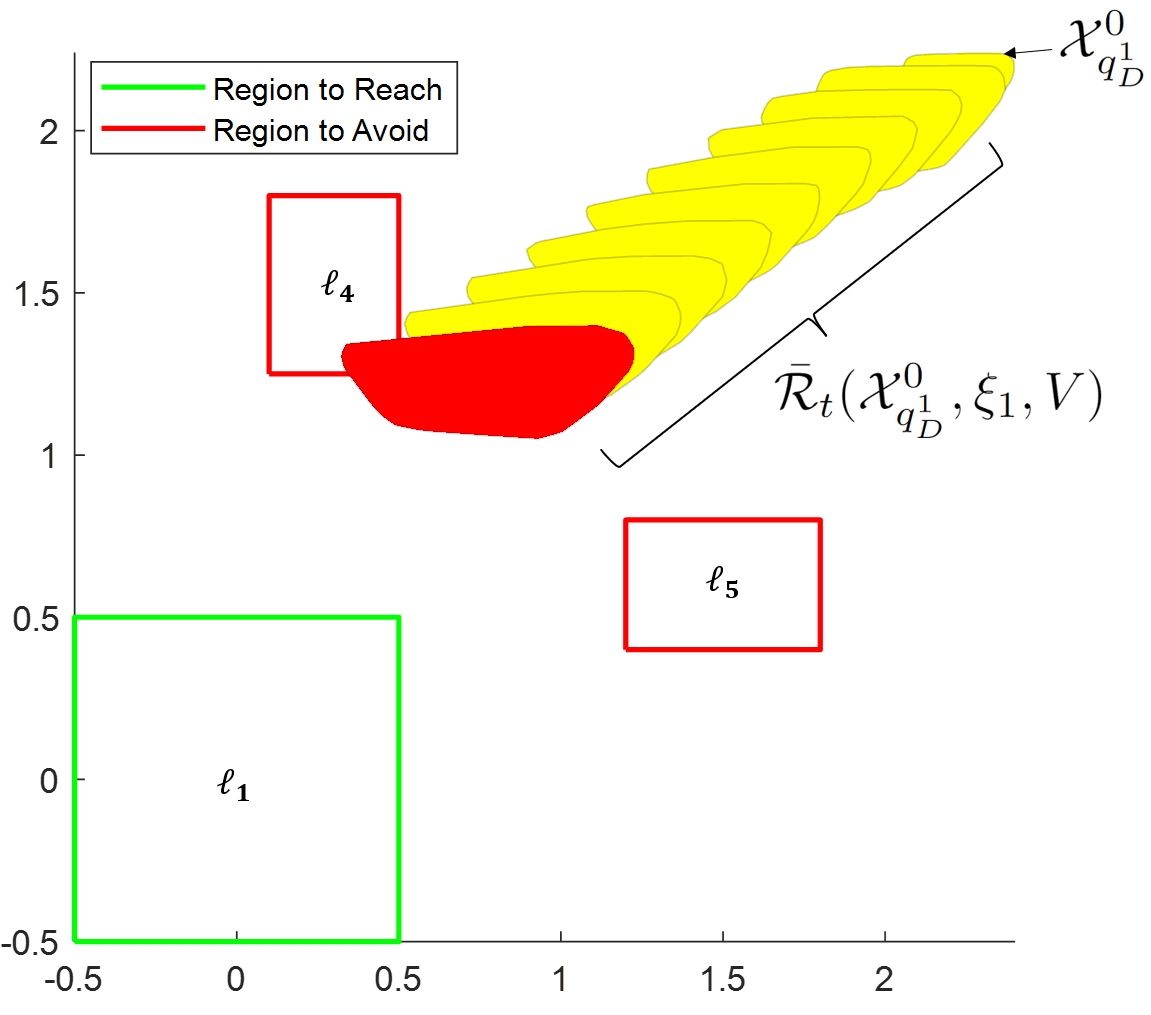}
    \caption{$q_D^1\rightarrow q_D^F$ ($\epsilon=0.1$)}
    \label{fig:g1los}
    \caption{Verification of a DFA transition for Section \ref{sec:grobot} - $M=50,000$. All reachable sets that are outside/inside the regions-to-reach are shown with yellow/green color. Reachable sets that are hitting the regions-to-avoid are shown with red color.
    The red line captures robot trajectory. This figure shows that $q_D^1 \rightarrow q_D^F$ is verified as unsafe when $\epsilon$ increases. 
    }
\end{figure}

\subsection{Reachable Sets for Hardware Experiments}
Next, in Figure \ref{fig:hardwareexp},  we present the reachable sets computed for the hardware experiment discussed in Section \ref{sec:hware_exp}. Observe that both DFA transitions are verified to be safe using the robot simulator. Also, observe in this Figure that the trajectories generated by the actual robot platform remain inside the reachable sets (with small deviations as discussed in Section \ref{sec:hware_exp}). Figure \ref{fig:g3_4} shows a robot trajectory when the robot is subject to exogenous disturbances (oily wheels) that are not modeled by the system simulator violating the probabilistic safety guarantees computed using simulated data. 
\begin{figure}[h]
    \centering
    \begin{subfigure}[b]{0.48\linewidth}
        \includegraphics[width=\textwidth]{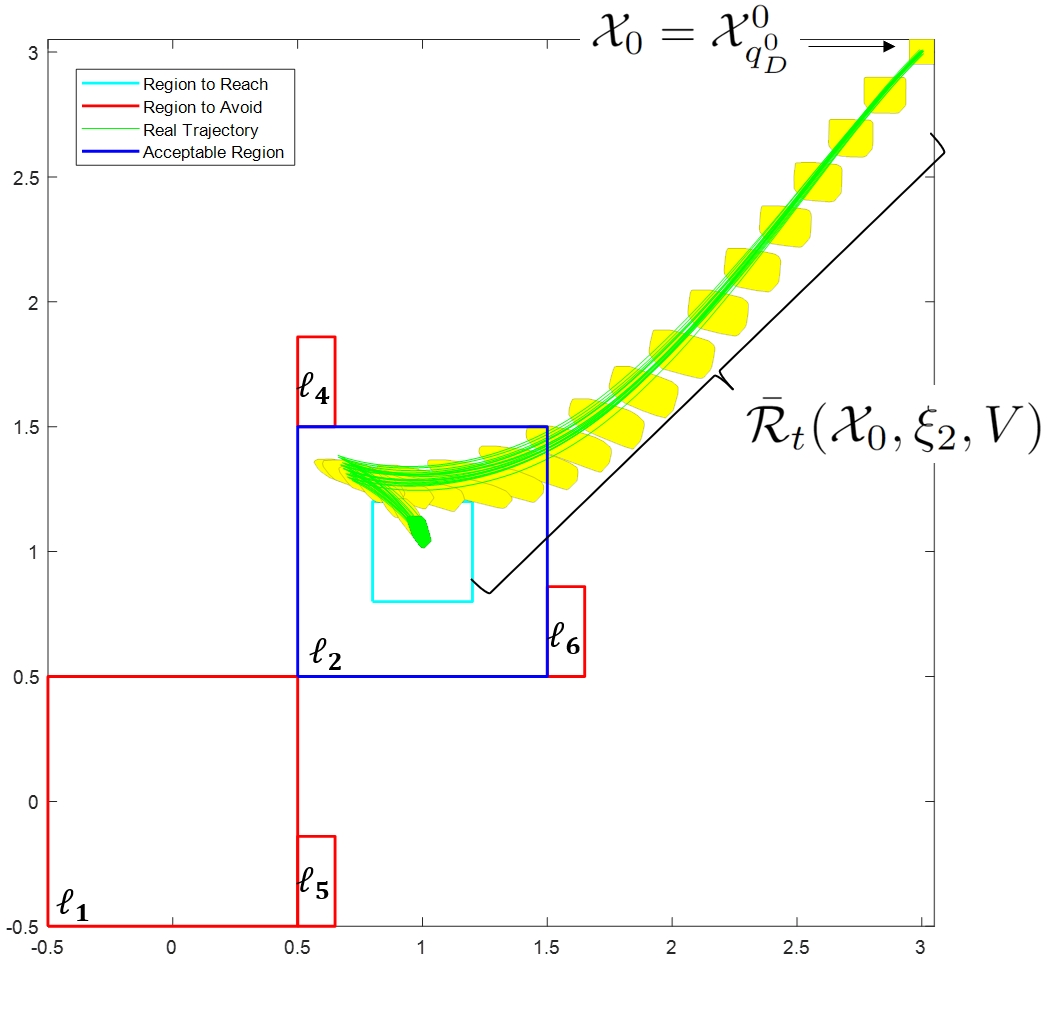}
        \caption{$q_D^0 \rightarrow q_D^1$ }
        \label{fig:g3_1}
    \end{subfigure}
    \begin{subfigure}[b]{0.44\linewidth}
        \includegraphics[width=\textwidth]{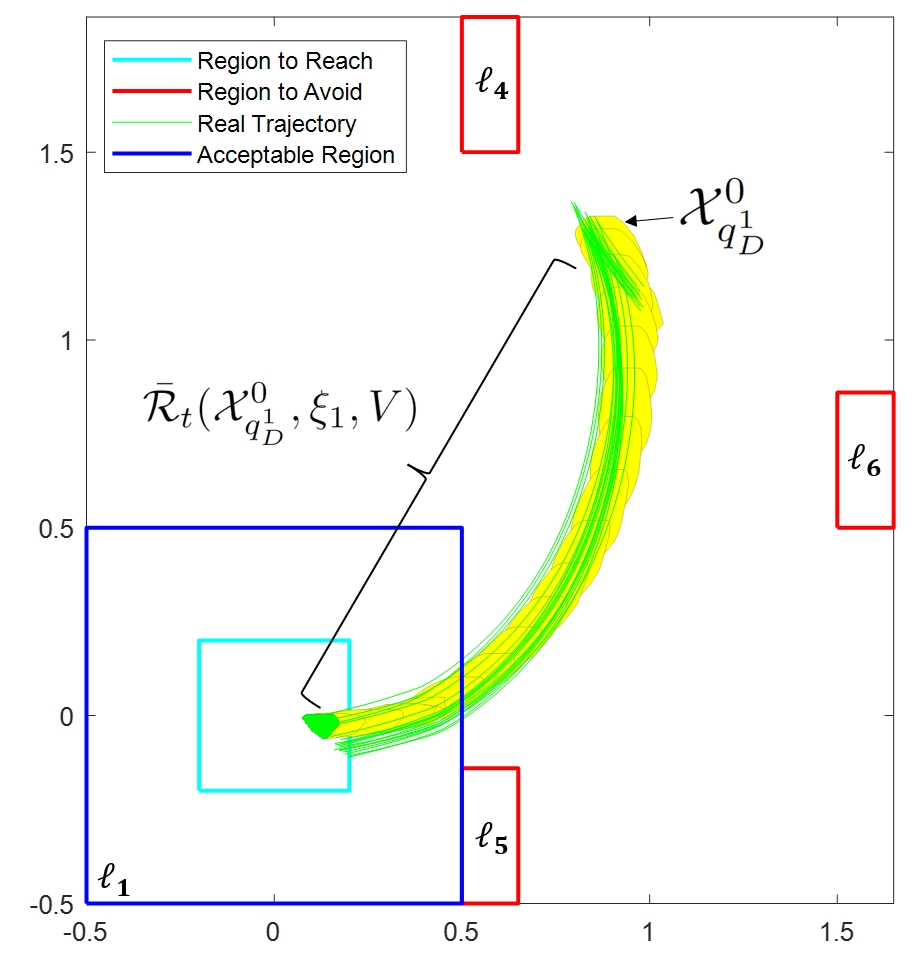}
        \caption{$q_D^1 \rightarrow q_D^F$}
        \label{fig:g3_2}
    \end{subfigure}
    \caption{Verification of DFA transitions for the hardware experiment with $\theta_0 \in [3.92, 3.94]$; The regions of interest that need to be avoided/reached to enable a DFA transition are shown with red/blue color. All other regions are not shown. All reachable sets that are outside/inside the regions to reach are shown with yellow/green color. The green lines capture robot trajectories corresponding to various initial states.}
    \label{fig:hardwareexp}
\end{figure}



\begin{figure}[h]
    \centering
    \includegraphics[width=0.8\linewidth]{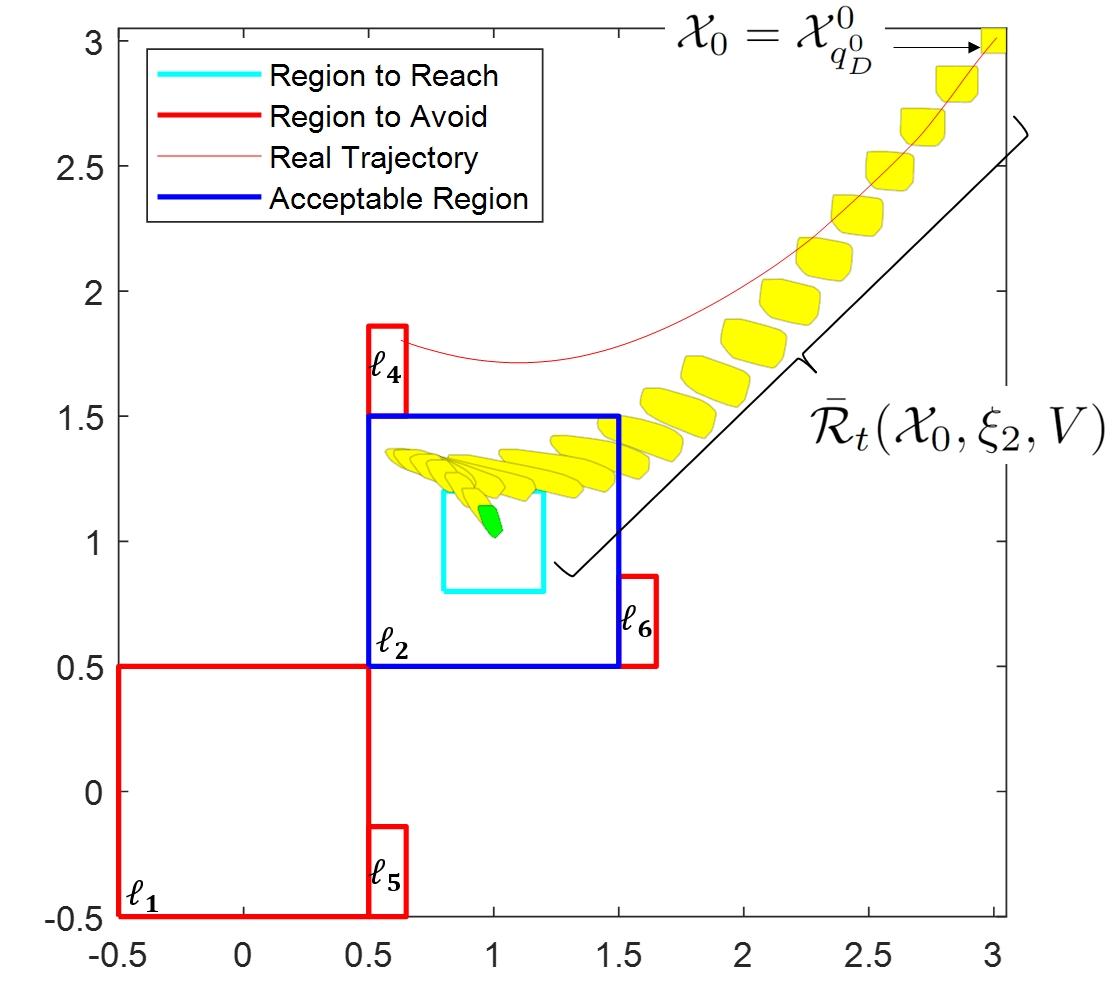}
    \caption{$q_D^0 \rightarrow q_D^1$ ($\theta_0 \in [3.92, 3.94]$) }
    \label{fig:g3_4}
    \caption{Verification of a DFA transition for the hardware experiment when the robot is subject to unmodeled disturbances. All reachable sets that are outside/inside the region-to-reach are shown with yellow/green color. Reachable sets that are hitting the regions-to-avoid are shown with red color.
    The red line captures robot trajectory when the robot is subject to exogenous disturbances that are not captured by the system model/simulator. Observe that the actual trajectory of the robot is outside the reachable sets and eventually leads to collision with obstacle $\ell_4$.
    }
\end{figure}

\clearpage
\section{Additional Numerical Experiment} \label{sec:appendix_extra}

In this appendix, we provide an additional numerical experiment.
We consider the following LTL formula 
\begin{align}
    &\phi=(\Diamond\pi^{\ell_1})\wedge (\Diamond\pi^{\ell_2}) \wedge (\Diamond\pi^{\ell_3})\wedge (\neg \pi^{\ell_2} \ccalU \pi^{\ell_3}) \wedge (\neg \pi^{\ell_1} \ccalU \pi^{\ell_2}) \wedge\nonumber\\ &(\neg \pi^{\ell_4} \ccalU \pi^{\ell_1}) \wedge (\neg \pi^{\ell_5} \ccalU \pi^{\ell_1}),\nonumber
\end{align}
requiring the ground robot to eventually visit the regions $\ell_1$, $\ell_2$ and $\ell_3$, in the order of $\ell_3 \rightarrow \ell_2 \rightarrow \ell_1$, while avoiding the obstacles $\ell_4$ and $\ell_5$.
This formula corresponds to the DFA shown in Fig. \ref{fig:g2first}; notice that the transition from $q_D^0$ to $q_D^{F}$, $q_D^1$ to $q_D^{F}$, and $q_D^0$ to $q_D^2$ were pruned.
The initial set $\ccalX_0$ of system states is defined as $\{ \bbx_0 \in \ccalX | x_0^1\in [3.7, 3.8]; ~x_0^2 \in [3.5, 3.6];~ \theta_0 \in [4.02, 4.04] \}$.
To compute the reachable sets, we apply $\epsilon$-RandUP using $M=600,000$ sampled points for the initial set of states with padding size of $\epsilon=0.03$.
\begin{figure}[h]
\centering
\includegraphics[width=\linewidth]{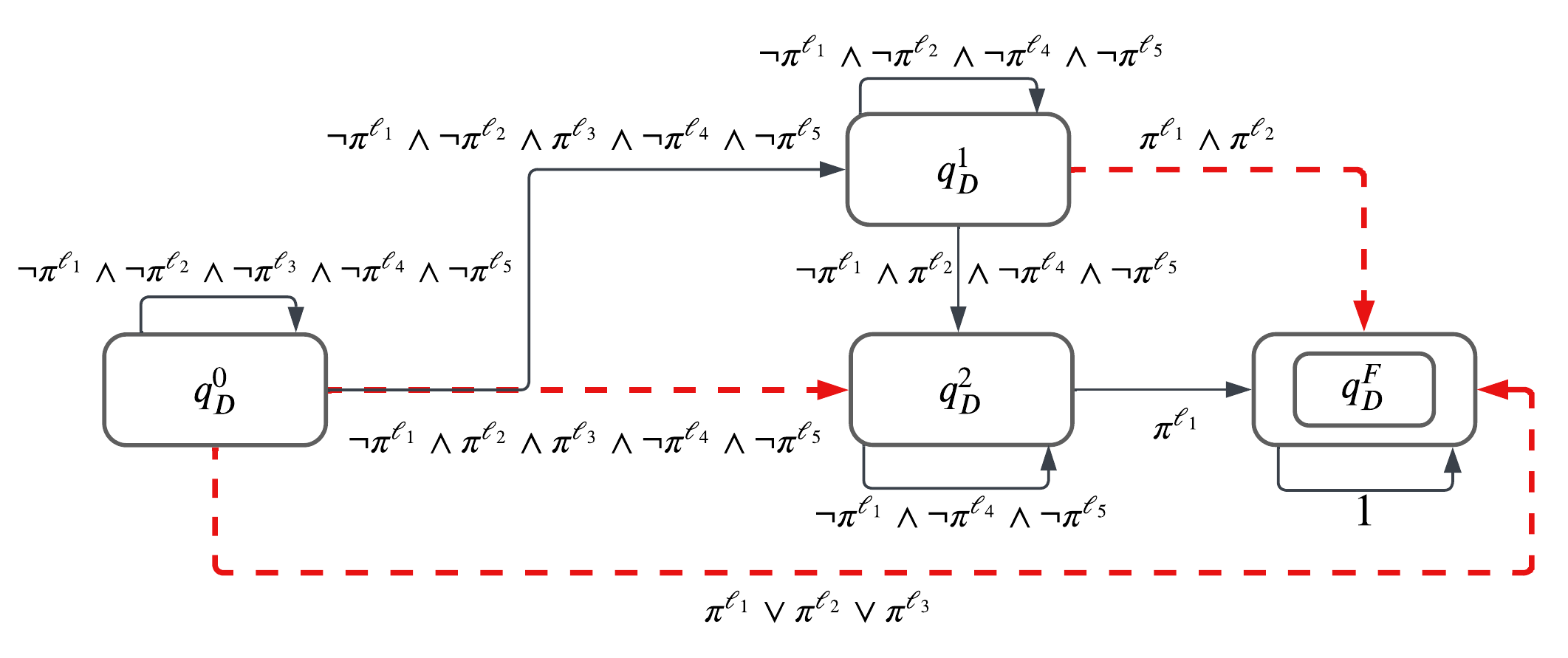}
\caption{DFA for the case study in Appendix \ref{sec:appendix_extra}. The red dashed transitions correspond to infeasible transitions.}
\label{fig:g2first}
\end{figure}

Our method, first, investigates the DFA transition from $q_D^0$ to $q_D^1$. This transition requires the ground robot to stay within the obstacle-free space (i.e., avoid the obstacles $\ell_1$, $\ell_2$, $\ell_4$ and $\ell_5$) and eventually reach $\ell_3$. \textcolor{black}{Therefore, we have that $\Xi_{q_D^0\rightarrow q_D^1} = \{\xi_3\}$.}
The corresponding reachable sets for this DFA transition are shown in Figure \ref{fig:g2suc}. Notice that the reachable sets $\bar{\ccalR}_t(\ccalX_0,\xi_3, V)$ are fully outside the obstacle regions for $\bar{\ccalR}_1(\ccalX_0,\xi_3, V) \sim \bar{\ccalR}_{10}(\ccalX_0,\xi_3, V)$ while $\bar{\ccalR}_{11}(\ccalX_0,\xi_3, V)$ is fully inside $\ell_3$. Thus, this DFA transition is verified to be safe with the probability at least equal to $(0.9996)^{11}$. Next, the DFA transition from $q_D^1$ to $q_D^2$ is considered requiring the robot to reach $\ell_2$ while avoiding the obstacle regions $\ell_1$, $\ell_4$ and $\ell_5$. \textcolor{black}{Thus, we have that  $\Xi_{q_D^1\rightarrow q_D^2} = \{\xi_2\}$.}
The set of initial states for this transition is $\ccalX_{q_D^1}^0=\bar{\ccalR}_{11}(\ccalX_0,\xi_3, V)$. After computing the padded reachable convex hulls $\bar{\ccalR}_t(\ccalX_{q_D^1}^0,\xi_2, V)$  shown in Figure \ref{fig:g2suc3}, this transition is verified to be safe with the probability of $(0.9996)^{15}$. Similarly, we can show  that the transition from $q_D^2$ to $q_D^F$  is also safe with probability at least equal to $(0.9996)^{11}$; see Figure \ref{fig:g2sucf}.
We conclude that there exists a control strategy $\boldsymbol\xi=\xi_3,\xi_2,\xi_1$ with $H_0=11, H_1=15,$ and $H_3=11$, such that  $\mathbb{P}_{\tau(\bbx_0)\sim D}(\tau(\bbx_0)\models\phi|\boldsymbol\xi,\ccalX_0)=(0.9996)^{11+15+11}\geq0.9853$.


\begin{figure}[!ht]
    \centering
    \begin{subfigure}[b]{0.51\linewidth}
        \includegraphics[width=\textwidth]{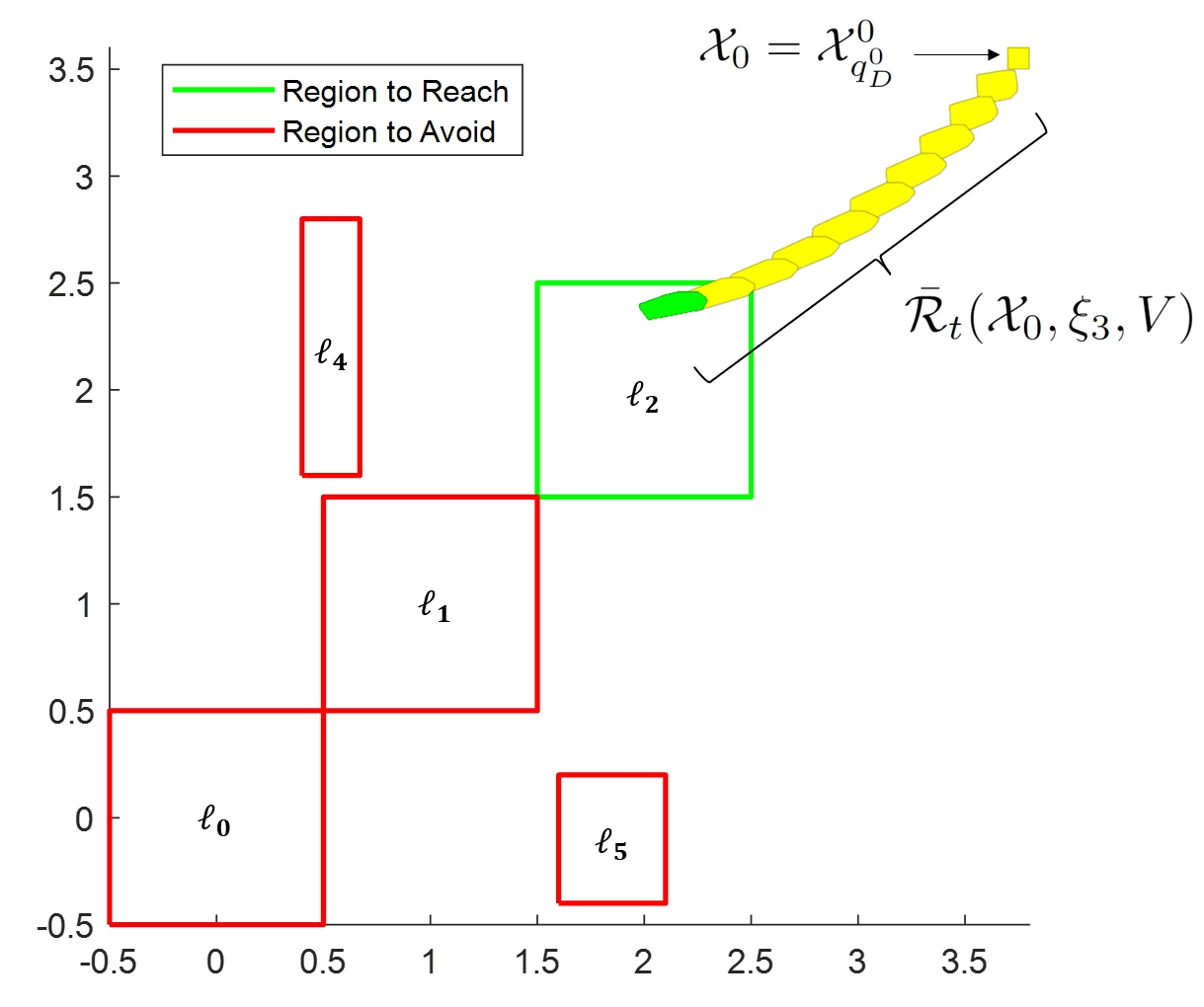}
        \caption{$q_D^0\rightarrow q_D^1$ ($\epsilon=0.03$) }
        \label{fig:g2suc}
    \end{subfigure}
    \begin{subfigure}[b]{0.47\linewidth}
        \includegraphics[width=\textwidth]{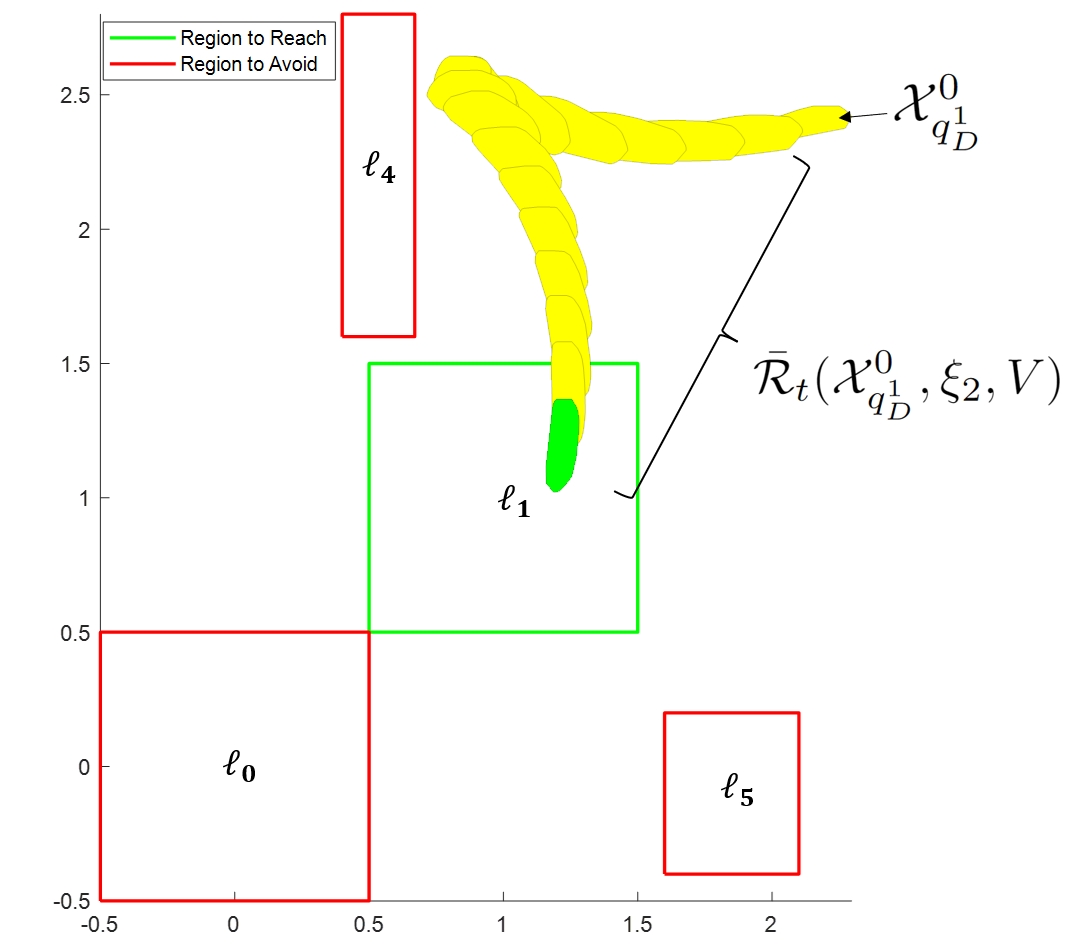}
        \caption{$q_D^1\rightarrow q_D^2$ ($\epsilon=0.03$) }
        \label{fig:g2suc3}
    \end{subfigure}
    \caption{Verification of DFA transition $q_D^0\rightarrow q_D^1$ and $q_D^1\rightarrow q_D^2$ for the numerical experiment discussed in Appendix \ref{sec:appendix_extra}.}
    \label{fig:g2_2}
\end{figure}

\begin{figure}[!ht]
    \centering
    \includegraphics[width=0.8\linewidth]{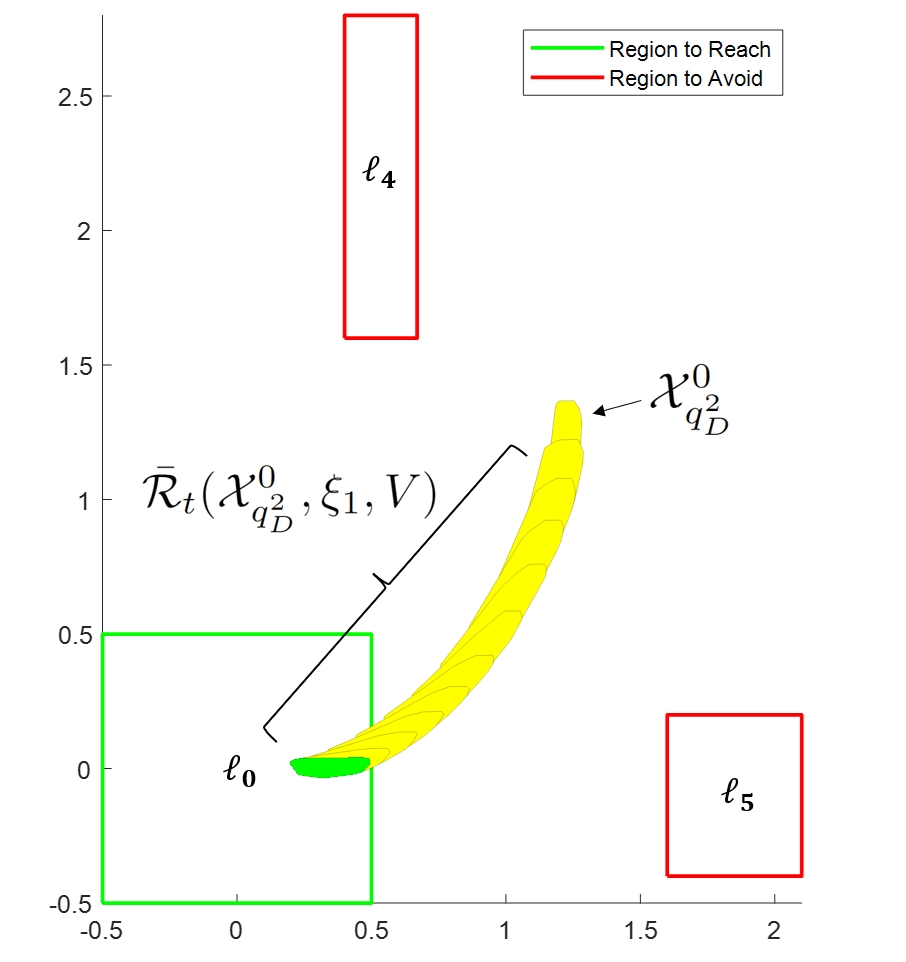}
    \caption{Verification of DFA transition $q_D^2\rightarrow q_D^F$ for the numerical experiment discussed in Appendix \ref{sec:appendix_extra}.}
    \label{fig:g2sucf}
\end{figure}

\end{appendices}

\end{document}